%% file: main.tex
\documentclass[letterpaper]{article} 
\usepackage{aaai2027}  
\usepackage[hyphens]{url}  
\usepackage{graphicx} 
\usepackage{natbib}  
\usepackage{caption} 
\usepackage{algorithm}
\usepackage{algorithmic}
\usepackage{subcaption}

\usepackage{amsmath}
\usepackage{amsfonts}
\usepackage[notextcomp]{stix}

\newcommand{\rightrelation}[2]{%
  \mathrel{\underset{#2}{\overset{#1}{\rightsquigarrow}}}
}

\newcommand{\leftrightrelation}[2]{%
  \mathrel{\underset{#2}{\overset{#1}{\leftrightsquigarrow}}}
}

\newcommand{\Ber}[0]{\text{Ber}}
\newcommand{\MM}[0]{\text{MM}}

\newcommand{\ImS}[0]{\text{Im}(S)}
\newcommand{\Supp}[1]{\text{Supp}(#1)}

\usepackage{amsthm}
\theoremstyle{definition}
\newtheorem{theorem}{Theorem}
\newtheorem{proposition}{Proposition}

\newtheorem{corollary}{Corollary}
\newtheorem{definition}{Definition}

\usepackage{newfloat}
\usepackage{listings}
\DeclareCaptionStyle{ruled}{labelfont=normalfont,labelsep=colon,strut=off} 
\floatstyle{ruled}
\newfloat{listing}{tb}{lst}{}
\floatname{listing}{Listing}

\usepackage{booktabs}

\nocopyright

\title{Proportional Analogies on Probability Distributions via Bayesian Updating}
\author{
    Pierre-Alexandre Murena
}
\affiliations{
    Hamburg University of Technology (TUHH)\\
    Research Group on Human-Centric Machine Learning \\
    pierre-alexandre.murena@tuhh.de
}

\begin{document}

\maketitle

\begin{abstract}

\input{sections/abstract}
\end{abstract}

\begin{links}
    \link{Code}{https://github.com/ppaamm/Analogies-on-Probability-Distributions-by-Bayes-Updating}
\end{links}

\section{Introduction}
\input{sections/introduction}

\section{Background and Related Works}
\input{sections/related}

\section{Bayes-Based Analogies}

\input{sections/bayes-analogies}

\section{Bayes-Based Analogies on the Exponential Family}

\input{sections/exponential-family}

\section{Instantiation of Classical Exponential Families}
\input{sections/examples-exponential-family}

\section{Approximate Solver of Analogical Equations}
\input{sections/approximate-solver}



\section{Conclusion}
\label{sec:conclusion}
\input{sections/conclusion}

\section*{Acknowledgments}

The author would like to warmly thank Jean Lieber for the supporting discussions and feedback about this work. 

\bibliography{aaai2027}

\end{document}

%% file: sections/abstract.tex
Analogies are quaternary relations of the form ``A is to B as C is to D". Among the various formalizations of analogical reasoning, proportional analogies provide an important axiomatic framework by characterizing valid analogies through a set of postulates. While proportional analogies have been extensively studied over Boolean, symbolic, and real-valued domains, their extension to probability distributions remains largely unexplored. In this paper, we introduce a notion of proportional analogy for probability distributions based on Bayesian updating. Our approach builds upon the idea that two distributions are related whenever one can be transformed into the other through Bayesian updating induced by a suitable set of observations. We investigate this framework for several standard members of the exponential family and discuss how it naturally extends to arbitrary probability distributions through Gaussian mixture approximations.

%% file: sections/introduction.tex
Analogical reasoning is a fundamental aspect of human cognition~\cite{mitchell2021abstraction}, enabling the comparison of objects through their structural similarities~\cite{gentner1983structure}. An analogy is defined as a quaternary relation of the form ``$a$ is to $b$ as $c$ is to $d$'', usually denoted by $a : b :: c : d$. Although analogy has traditionally been studied within symbolic artificial intelligence, it has recently attracted increasing attention in numerical machine learning. Notable applications include the characterization of word embeddings~\cite{mikolov2013efficient} and Large Language Models~\cite{webb2023emergent}, as well as data augmentation~\cite{couceiro2017analogy} and the study of inductive reasoning~\cite{murena2022measuring,bayoudh2007learning}.

Despite these advances, the study of analogical reasoning remains largely confined to a limited range of data domains. Existing work has mainly focused on hierarchical structures~\cite{gick1980analogical}, Boolean data~\cite{prade2017boolean}, character strings~\cite{lepage1998solving}, and vector spaces~\cite{rumelhart1973model}. In contrast, analogies between probability distributions have received comparatively little attention, despite their potential importance for machine learning. To the best of our knowledge, only two approaches have been proposed. \citet{murena2018opening} define analogies through parallel transport in information geometry. Their approach is restricted to distributions belonging to the same parametric family, is computationally demanding, and does not satisfy accepted principles of analogical reasoning. More recently, \citet{prade2025analogical} proposed an extension of classical proportional analogies to categorical distributions. However, their framework is limited to this specific family, has a restricted domain of definition, and does not explicitly exploit the statistical nature of probability distributions.

In this paper, we introduce a new notion of analogy between probability distributions based on Bayesian updating. The central idea is to characterize the similarity between two distributions through the existence of a dataset capable of transforming one distribution into the other by Bayesian inference with a given likelihood. This perspective naturally grounds analogical reasoning in the fundamental mechanisms of probabilistic inference while being applicable to arbitrary probability distributions. Moreover, our framework satisfies the classical properties expected from analogical reasoning and admits an elegant characterization for important statistical families.  Our contributions are the following: 
\begin{enumerate}
    \item We introduce a novel definition of proportional analogy between probability distributions based on Bayesian updating, and show that it satisfies the fundamental properties of proportional analogies;
    \item We prove that, for distributions belonging to the exponential family, the proposed notion admits a simple arithmetic characterization in the natural parameter space;
    \item We develop a sampling-based algorithm for solving analogical equations between arbitrary probability distributions, making the proposed framework applicable beyond analytically tractable models.
\end{enumerate}

%% file: sections/related.tex
There exists various approaches to analogies, which all build upon the idea of correspondence between two domains, based on their inherent structure and of the transfer of properties~\cite{hall1989computational}. Several computational approaches were proposed for analogies. 

\subsection{Computational Approaches to Analogy}

The \emph{model-based} vision consists in using a model to describe the underlying structure of the domains and enforce the analogies. The Structure-Mapping Theory~\cite{gentner1983structure} defines analogy as a maximal alignment of latent structures. \citet{antic2022analogical} characterizes analogies via universal algebra, where justifications are given as operations on shared primitives. Similarly, \citet{murena2022measuring} introduces models as programs on a Turing machine and define the alignment of domains by the existence of minimal inputs producing a description of the analogy. 

A special case of the model-based approach is the \emph{functional approach}, that states that $a : b :: c : d$ if and only if there exists a function $f$ such as $b = f(a)$ and $d = f(c)$. A modern example of such an approach is given by the work of \citet{lee2024enhancing} who build $f$ as the composition of elementary operations learned with reinforcement learning.

The \emph{axiomatic vision} describes analogies through a set of postulates that they must satisfy. The most common definition is the notion of \emph{proportional analogies}~\cite{lepage2004analogy}, also called \emph{analogical proportions} by some authors~\cite{prade2021analogical}.

\subsection{Proportional Analogies}

The definition of proportional analogies is a formalization of principles introduced by Aristotle, in the form of the following three principles:

\begin{definition}
A proportional analogy is a quaternary relation $\mathcal{A}$ such that for all $a, b, c, d$, we have:
\begin{enumerate}
    \item Reflexivity: $\mathcal{A}(a, b, a, b)$. 
    \item Symmetry: $\mathcal{A}(a, b, c, d)$ iff $\mathcal{A}(c, d, a, b)$
    \item Central permutation: $\mathcal{A}(a, b, c, d)$ iff $\mathcal{A}(a, c, b, d)$. 
\end{enumerate}
\label{def:proportional-analogy}
\end{definition}
Some authors also include a fourth postulate, called \emph{determinism}, and stating that $\mathcal{A}(a, b, a, x)$ implies $x = b$~\cite{miclet2008analogical}. An analogical proportion $\mathcal{A}(a, b, c, d)$ is often denoted $a : b :: c : d$. 

Analogical proportions have been extensively studied in the Boolean domain~\cite{prade2017boolean,antic2022analogical}, on character strings~\cite{lepage1998solving,stroppa2005analogical}, with applications e.g. in image processing~\cite{duck2022analogy}, recommender systems~\cite{hug2019analogical} or natural language~\cite{lepage2004analogy}. We refer the reader to the position paper of~\citet{prade2021analogical} for a more extensive presentation of applications of proportional analogies.

An important relation satisfying the properties of Definition~\ref{def:proportional-analogy} is the arithmetic analogy:
\begin{definition}
Four vectors $a, b, c, d \in \mathbb{R}^n$ are said to be in \emph{arithmetic analogy} is and only if $b - a = d - c$. 
\label{def:arithmetic-analogy}
\end{definition}

This analogy was introduced before the formalization of proportional analogies~\cite{rumelhart1973model} and is sometimes designated as \emph{parallelogram rule}.

\subsection{Proportional Analogies on Distributions}

Despite their potential application to machine learning, analogies on probability distributions have not been extensively studied. 

A first attempt was proposed by~\citet{murena2018opening}. The authors propose to extend the parallelogram rule to Riemannian manifolds, by interpreting subtractions as a logarithmic map and the addition as a parallel transport along a geodesic. The authors show that this definition fails to satisfy the properties of proportional analogies when the manifold is not flat. However, they show that their definition can apply to probability distributions in the Fisher-Rao manifold~\cite{nielsen2020elementary}, and they propose an algorithm for analogies on multivariate normal distributions. 

More recently, \citet{prade2025analogical} propose a direct application of Definition~\ref{def:proportional-analogy} to categorical distributions by the introduction of arithmetico-geometric analogies, satisfying arithmetic analogy (Definition~\ref{def:arithmetic-analogy}) to the probabilities themselves and to their logarithm. This approach has severe limitations. The double constraint strongly limits the sets of valid analogies, making it a too strong definition for practical applications. More importantly, their definition does not extend naturally to distributions beyond categorical distributions, since it does not rely on an understanding of the specific nature of probability distributions.

%% file: sections/bayes-analogies.tex
In this section, we introduce the concept of Bayes-based proportional analogies on the set of probability distributions. 

\subsection{Bayesian Inference}

Let $(\mathcal X,\mathcal A)$ be the observation space and $(\Theta,\mathcal T)$ the parameter space, both endowed with their respective $\sigma$-algebras. A statistical model is a family of probability measures $\mathcal L=\{P_\theta : \theta\in\Theta\}$ on $(\mathcal X,\mathcal A)$. When the measures $P_\theta$ are dominated by a common measure, we denote their densities by $p(x\mid\theta)$.

In the Bayesian framework, the parameter $\theta$ is modeled as a $\Theta$-valued random variable. A prior distribution on $\theta$ with density $p(\theta)$ specifies the uncertainty about $\theta$. Thus, the prior turns the measurable space $(\Theta,\mathcal T)$ into a probability space $(\Theta,\mathcal{T},\Pi)$. The prior distribution is updated, after observation of $x \in \mathcal{X}$, into the posterior distribution according to Bayes' rule:
\begin{equation}
    p(\theta\mid x) = \frac{p(x\mid\theta)\,p(\theta)}{p(x)},
    \label{eqn:bayes-rule}
\end{equation}
where $p(x) = \int_\Theta p(x\mid\vartheta)\,p(\vartheta)\,d\vartheta$ is the marginal likelihood. 

To account for the absence of observation, we extend the observation space by introducing a distinguished symbol $\varnothing \notin \mathcal X$ and define $\bar{\mathcal X} = \mathcal X \cup \lbrace \varnothing \rbrace$.
The posterior associated with the null observation is defined by $p(\theta \mid \varnothing) = p(\theta)$. Equivalently, Bayesian updating with $\varnothing$ acts as the identity on the space of prior distributions.

We use the notation $p \rightrelation{x}{\mathcal{L}} q$ to denote that $q$ is the posterior distribution obtained from the prior distribution $p$ after observing $x$ under the statistical model $\mathcal L$. Note that the relation $\rightrelation{x}{\mathcal{L}}$ is not symmetrical. We consider the symmetrized version by defining $p \leftrightrelation{x}{\mathcal{L}} q$ if and only if $p \rightrelation{x}{\mathcal{L}} q$ or  $q \rightrelation{x}{\mathcal{L}} p$

\subsection{Bayes-Based Proportional Analogies}

Consider a space $\mathcal{P}$ of probability distributions over $\Theta$ and a statistical model $\mathcal{L}$ with parameter in $\Theta$. We define the quaternary relation $R_{\mathcal{L}}$ over $\mathcal{P}$ as: 
\[
    R(p_A, p_B, p_C, p_D) \quad \text{iff \quad there exist } x_1, x_2 \in \bar{\mathcal{X}}; \begin{cases}
        p_A \leftrightrelation{x_1}{\mathcal{L}} p_B \\
        p_C \leftrightrelation{x_1}{\mathcal{L}} p_D \\
        p_A \leftrightrelation{x_2}{\mathcal{L}} p_C \\
        p_B \leftrightrelation{x_2}{\mathcal{L}} p_D
    \end{cases} 
\]

\begin{proposition}
If $\mathcal{P}$ and $\mathcal{L}$ are such that for all $p, q \in \mathcal{P}$, there exists $x \in \mathcal{X}$ such that $p \leftrightrelation{x}{\mathcal{L}} q$, then the relation $R$ defined above is a proportional analogy, and we call it \textit{Bayes-based proportional analogy for likelihood $\mathcal{L}$}. 
\label{prop:definition-BBAn}
\end{proposition}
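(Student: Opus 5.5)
We verify the three postulates of Definition~\ref{def:proportional-analogy} in turn, working directly from the definition of $R$. Only reflexivity uses the hypothesis of the proposition; the other two postulates are purely structural.

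\textbf{Reflexivity.} We must show $R(p_A,p_B,p_A,p_B)$ for arbitrary $p_A,p_B\in\mathcal P$.
\begin{itemize}
\item By hypothesis there is $x\in\mathcal X$ with $p_A \leftrightrelation{x}{\mathcal{L}} p_B$. Take $x_1=x$; this gives both the first and the second condition, since the pair $(p_C,p_D)$ is again $(p_A,p_B)$.
\item Take $x_2=\varnothing$. Updating with $\varnothing$ acts as the identity, so $p_A \rightrelation{\varnothing}{\mathcal{L}} p_A$ and $p_B \rightrelation{\varnothing}{\mathcal{L}} p_B$.
\item Hence $p_A \leftrightrelation{\varnothing}{\mathcal{L}} p_A$ and $p_B \leftrightrelation{\varnothing}{\mathcal{L}} p_B$, which are exactly the last two conditions.
\end{itemize}
The only point needing care is that $\rightrelation{\varnothing}{\mathcal{L}}$ relates each $p$ to itself. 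This holds by the convention $p(\theta\mid\varnothing)=p(\theta)$.

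\textbf{Symmetry.} Suppose $R(p_A,p_B,p_C,p_D)$ is witnessed by $(x_1,x_2)$. We show $R(p_C,p_D,p_A,p_B)$ with the same witnesses $(x_1,x_2)$.
\begin{itemize}
\item The first two conditions for the new tuple are $p_C\leftrightrelation{x_1}{\mathcal{L}}p_D$ and $p_A\leftrightrelation{x_1}{\mathcal{L}}p_B$. These are the original first two conditions in swapped order.
\item The last two conditions are $p_C\leftrightrelation{x_2}{\mathcal{L}}p_A$ and $p_D\leftrightrelation{x_2}{\mathcal{L}}p_B$. These follow from the originals because $\leftrightrelation{x}{\mathcal{L}}$ is symmetric by construction.
\end{itemize}
Since the map $(p_A,p_B,p_C,p_D)\mapsto(p_C,p_D,p_A,p_B)$ is an involution, this single implication already yields the stated equivalence.

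\textbf{Central permutation.} Suppose $R(p_A,p_B,p_C,p_D)$ is witnessed by $(x_1,x_2)$. We show $R(p_A,p_C,p_B,p_D)$ with the swapped witnesses $(x_2,x_1)$.
\begin{itemize}
\item The required conditions are $p_A\leftrightrelation{x_2}{\mathcal{L}}p_C$, $p_B\leftrightrelation{x_2}{\mathcal{L}}p_D$, $p_A\leftrightrelation{x_1}{\mathcal{L}}p_B$ and $p_C\leftrightrelation{x_1}{\mathcal{L}}p_D$.
\item These are exactly the four original conditions, listed in a different order.
\end{itemize}
Again the permutation is an involution, so the converse holds as well.

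Nothing in the argument is delicate. The main obstacle is only to confirm that the null observation $\varnothing$ is allowed in the witnesses ($x_1,x_2\in\bar{\mathcal X}$) and really gives self-loops, since this is what makes reflexivity go through.
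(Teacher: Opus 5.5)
Your proof is correct and takes the same route as the paper. Reflexivity comes from choosing $x_1$ by the hypothesis and $x_2=\varnothing$ (using $p \leftrightrelation{\varnothing}{\mathcal{L}} p$), and symmetry and central permutation follow directly from the definition of $R$; you simply make explicit the witness choices the paper leaves implicit, namely $(x_1,x_2)$ together with the symmetry of $\leftrightrelation{x}{\mathcal{L}}$ for symmetry, and the swapped pair $(x_2,x_1)$ for central permutation.
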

\begin{proof}
The symmetry and central permutation postulates follow directly from the definition. To prove that $p_A : p_B ::^\mathcal{L} p_A : p_B$, we observe that, for any distribution $p \in \mathcal{P}$, we have $p \leftrightrelation{\varnothing}{\mathcal{L}} p$. The reflexivity follows directly from this observation and from the premises of the proposition. 
\end{proof}

A more restrictive definition requires that the four updates witnessing the analogy are shared across the corresponding pairs of distributions. That is, there exist $x_1, x_2, x_3, x_4 \in \bar{\mathcal{X}}$ such that $x_1$ induces both $p_A \rightsquigarrow p_B$ and $p_C \rightsquigarrow p_D$, $x_2$ induces both $p_B \rightsquigarrow p_A$ and $p_D \rightsquigarrow p_C$, $x_3$ induces both $p_A \rightsquigarrow p_C$ and $p_B \rightsquigarrow p_D$, and $x_4$ induces both $p_C \rightsquigarrow p_A$ and $p_D \rightsquigarrow p_B$. This amounts to defining a transformation between two distributions whenever one can be obtained from the other as either a prior-to-posterior or a posterior-to-prior update. This definition was not retained in this paper because it is overly restrictive. Indeed, Bayesian updates are generally not reversible. For example, in the family of normal distributions, every update decreases the posterior variance. Hence, if $\mathcal{N}_1 \rightsquigarrow \mathcal{N}_2$, the reverse transformation $\mathcal{N}_2 \rightsquigarrow \mathcal{N}_1$ cannot occur unless $\mathcal{N}_1 = \mathcal{N}_2$. As a result, this definition would preclude any non-trivial analogy between normal distributions. 

Our proposed definition of analogy between distributions admits a natural interpretation in terms of statistical reachability. A similar perspective is already implicit in standard notions of analogy. For instance, an arithmetic analogy (Definition~\ref{def:arithmetic-analogy}) can be characterized through a transformation $a \rightsquigarrow b$ induced by a translation.

This definition of analogy has an important consequence in terms of the distributions. 

\begin{proposition}
If $p_A : p_B ::_{\mathcal{L}} p_C : p_D$, then for all $\theta \in \Theta$, either (1) the distributions $\log p(\theta)$ are in arithmetic analogy; or (2) the reverse solution holds, i.e. $\log p_A(\theta) = \log p_D(\theta)$ and $\log p_B(\theta)=\log p_C(\theta)$.
\label{prop:characterization-bayes-analogies}
\end{proposition}
\begin{proof}
$p_A \rightrelation{x}{\mathcal{L}}$ is equivalent to:
\[
\log p_B(\theta) = \log p_A(\theta) + \log \mathcal{L}(x \mid \theta) - \log p(x). 
\]
Let $\theta \in \Theta$. There exists $x, x^\prime \in \mathcal{X}$ and $\sigma_{AB}$, $\sigma_{CD}$, $\sigma_{AC}$ and $\sigma_{BD} \in \lbrace 0, 1 \rbrace$ such that:
\begin{align*}
(-1)^{\sigma_{AB}} \big(\log p_B(\theta) - \log p_A(\theta) \big) = \log \mathcal{L}(x \mid \theta) - \log p(x) \\
(-1)^{\sigma_{CD}} \big(\log p_D(\theta) - \log p_C(\theta) \big) = \log \mathcal{L}(x \mid \theta) - \log p(x) \\
(-1)^{\sigma_{AC}} \big(\log p_C(\theta) - \log p_A(\theta) \big) = \log \mathcal{L}(x^\prime \mid \theta) - \log p(x^\prime) \\
(-1)^{\sigma_{BD}} \big(\log p_D(\theta) - \log p_B(\theta) \big) = \log \mathcal{L}(x^\prime \mid \theta) - \log p(x^\prime)
\end{align*}
which involves:
\[
\begin{cases}
\log p_B - \log p_A = (-1)^{\sigma_{CD} - \sigma_{AB}} \Big( \log p_D - \log p_C  \Big) \\
\log p_C - \log p_A = (-1)^{\sigma_{BD} - \sigma_{AC}} \Big( \log p_D - \log p_B  \Big) 
\end{cases}
\]
where we use the notation $\log p_i$ for $\log p_i(\theta)$.

In the case where $\sigma_{AB} = \sigma_{CD}$ and $\sigma_{AC} = \sigma_{BD}$, we obtain that $\log p_B(\theta) - \log p_A(\theta) = \log p_D(\theta) - \log p_C(\theta)$. 

In the case where $\sigma_{CD} = \sigma_{AB}$ and $|\sigma_{BD} - \sigma_{AC}| = 1$, the conditions imply that $\log p_A(\theta) = \log p_C(\theta)$ and $\log p_B(\theta) = \log p_D(\theta)$. Similarly, when $|\sigma_{CD} - \sigma_{AB}| = 1$ and $\sigma_{BD} = \sigma_{AC}$, we have $\log p_A(\theta) = \log p_B(\theta)$ and $\log p_C(\theta) = \log p_D(\theta)$. 

Finally, when $|\sigma_{BD} - \sigma_{AC}| = |\sigma_{CD} - \sigma_{AB}| = 1$, we observe $\log p_A(\theta) = \log p_D(\theta)$ and $\log p_B(\theta) = \log p_B(\theta)$. 
\end{proof}

The reverse solution ($\log p_A(\theta) = \log p_D(\theta)$ and $\log p_B(\theta)=\log p_C(\theta)$) is an important consequence of the choice of the $\leftrightsquigarrow$ relation. It corresponds to a case similar to the Klein model in analogies on Boolean~\cite{klein1981culture}, judged as debatable by some authors~\cite{prade2021analogical} while being naturally covered by some theoretical frameworks~\cite{antic2022analogical}.

\subsection{Analogical Equations}

Given three distributions $p_A, p_B$ and $p_C \in \mathcal{P}$, solving the analogical equation ``$p_A : p_B ::^{\mathcal{L}} p_C : q$'' consists in finding the distributions $q \in \mathcal{P}$ such that the analogy holds. We define the \emph{support of the analogical equation associated with $\mathcal{L}$} the set $\Supp{\mathcal{L}} \subseteq \mathcal{P}^3$ defined as the set of distributions $(p_A, p_B, p_C)$ such that the analogical equation $p_A : p_B ::^{\mathcal{L}} p_C : q$ admits at least one solution. Studying this set provides a natural way of evaluating the expressive power of a Bayes-based analogy: the larger the support, the larger the class of analogical problems that can be solved.

%% file: sections/exponential-family.tex
A probability belongs to the \emph{exponential family}~\cite{brown1986fundamentals} if its probability density function or probability mass function takes the form:
\begin{equation}
    p(\theta \mid \xi) = h(\theta) \exp \left( \eta(\xi)^T T(\theta) - A(\xi) \right)
    \label{eqn:exp-family}
\end{equation}
where $\xi$ is the parameter of the distribution, $h(\theta)$ the base measure, $T(\theta)$ the sufficient statistic, $\eta(\xi)$ the natural parameter of the distribution and $A(\xi)$ the log-partition. 

\subsection{Exponential Family as Conjugate Prior}

Consider the likelihood family $\mathcal{L}^S$ as the set of distributions of the form:
\begin{equation}
    p(x \mid \theta) \propto \exp(S(x)^T T(\theta))
\label{eqn:likelihood-exp-family-general}
\end{equation}
where $S(x)$ corresponds to the sufficient statistic of the observations with respect to the likelihood, that couples to the prior's sufficient statistic $T(\theta)$. 

\begin{proposition}
The exponential family is a conjugate prior of $\mathcal{L}^S$, and $p(\theta \mid \xi) \rightrelation{x}{\mathcal{L}^S} p(\theta \mid \xi^\prime)$ if and only if:
\begin{equation}
    \eta(\xi^\prime) = \eta(\xi) + S(x)
\end{equation}
\label{prop:conjugate-exponential}
\end{proposition}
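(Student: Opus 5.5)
The plan is to compute the posterior directly from Bayes' rule~\eqref{eqn:bayes-rule}, putting the prior in the form~\eqref{eqn:exp-family} and the likelihood in the form~\eqref{eqn:likelihood-exp-family-general}. Write the likelihood as $p(x\mid\theta) = g(x)\exp(S(x)^T T(\theta))$, where $g(x)$ collects all factors that do not depend on $\theta$. Multiplying by the prior gives
\[
p(\theta \mid x) \propto h(\theta)\exp\big( (\eta(\xi) + S(x))^T T(\theta) \big),
\]
because $A(\xi)$, $g(x)$ and the marginal $p(x)$ are constant in $\theta$. This kernel has the same base measure $h$ and the same sufficient statistic $T$ as the prior. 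Its natural parameter is $\eta(\xi)+S(x)$.

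The first step is conjugacy. The posterior has exactly the form~\eqref{eqn:exp-family} with natural parameter $\eta(\xi)+S(x)$, and the log-partition is whatever constant makes the density normalize. For this to be a member of the family, $\eta(\xi)+S(x)$ must lie in the natural parameter space, that is, $\int h(\theta)\exp((\eta(\xi)+S(x))^T T(\theta))\,d\theta<\infty$. Normalizability of the posterior gives this automatically: the posterior integrates to $1$ and is a positive multiple of this kernel. I would also assume that $\eta$ is onto the natural parameter space, or else index the family directly by its natural parameter. Then some $\xi'$ exists with $\eta(\xi') = \eta(\xi)+S(x)$, and the posterior is $p(\theta\mid\xi')$. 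This gives the ``if'' direction.

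The second step is the ``only if'' direction. Suppose the posterior equals $p(\theta\mid\xi')$. Taking logarithms of both expressions, on the set where $h(\theta)>0$, gives
\[
\big(\eta(\xi') - \eta(\xi) - S(x)\big)^T T(\theta) = c
\]
for some constant $c$, holding for almost every $\theta$. To conclude that the vector in parentheses is zero, I need the representation to be minimal, meaning the components of $T$ together with the constant $1$ are affinely independent on the support. This is the main obstacle, because the statement silently relies on this identifiability of natural parameters. I would state minimality explicitly as a standing assumption. Without it, the equality only holds modulo the affine dependencies of $T$. Under minimality the vector vanishes, which gives $\eta(\xi') = \eta(\xi)+S(x)$ and also forces $c=0$, consistent with the normalizing constants.

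Finally, the null observation $\varnothing$ fits the same formula if we set $S(\varnothing)=0$, since then the update is the identity. This remark is needed later when the proposition is combined with the definition of $R$.
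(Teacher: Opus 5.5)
Your proposal is correct and follows the same route as the paper: apply Bayes' rule and read off the posterior kernel $h(\theta)\exp\big((\eta(\xi)+S(x))^T T(\theta)\big)$. The paper's proof stops at that computation, so the assumptions you state explicitly — minimality of $T$ for the ``only if'' direction, $\eta$ reaching the whole natural parameter space for conjugacy, and $S(\varnothing)=0$ — fill in what the paper leaves implicit; the paper only adopts minimal statistics later in the section.
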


\begin{corollary}
For all $x \in \mathcal{X}$, we have $p(\theta \mid \xi) \leftrightrelation{x}{\mathcal{L}^S} p(\theta \mid \xi^\prime)$ if and only if there exists $\sigma \in \lbrace 0, 1 \rbrace$ such that:
\begin{equation}
    (-1)^\sigma \Big( \eta(\xi^\prime) - \eta(\xi) \Big) = S(x)
\end{equation}
\label{cor:relation-condition-exp-fam}
\end{corollary}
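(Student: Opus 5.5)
The corollary follows by unfolding the definition of the symmetrized relation and applying Proposition~\ref{prop:conjugate-exponential} to each direction. By definition, $p(\theta \mid \xi) \leftrightrelation{x}{\mathcal{L}^S} p(\theta \mid \xi^\prime)$ holds if and only if
\[
p(\theta \mid \xi) \rightrelation{x}{\mathcal{L}^S} p(\theta \mid \xi^\prime)
\quad \text{or} \quad
p(\theta \mid \xi^\prime) \rightrelation{x}{\mathcal{L}^S} p(\theta \mid \xi).
\]
We treat the two disjuncts separately and match them with $\sigma = 0$ and $\sigma = 1$.

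For the forward disjunct, Proposition~\ref{prop:conjugate-exponential} applied with prior parameter $\xi$ and posterior parameter $\xi^\prime$ gives the equivalence with $\eta(\xi^\prime) = \eta(\xi) + S(x)$. This is exactly $(-1)^0 \big( \eta(\xi^\prime) - \eta(\xi) \big) = S(x)$.

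For the backward disjunct, the same proposition with the roles of $\xi$ and $\xi^\prime$ exchanged gives the equivalence with $\eta(\xi) = \eta(\xi^\prime) + S(x)$. This rearranges to $(-1)^1 \big( \eta(\xi^\prime) - \eta(\xi) \big) = S(x)$.

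Since each disjunct is equivalent to the displayed condition for one value of $\sigma$, their disjunction is equivalent to the existence of some $\sigma \in \lbrace 0, 1 \rbrace$ satisfying it. That is the statement.

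The only delicate point is the scope of ``for all $x \in \mathcal{X}$''. The corollary is stated for genuine observations, where Proposition~\ref{prop:conjugate-exponential} applies directly. If one wanted to include $x = \varnothing$, I would set $S(\varnothing) = 0$. The identity update $p(\theta \mid \varnothing) = p(\theta)$ then corresponds to $\eta(\xi^\prime) = \eta(\xi)$, which is consistent with the formula for either $\sigma$.

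I would also note that the correspondence is at the level of natural parameters. If $\eta$ is not injective, $p(\theta \mid \xi)$ determines only $\eta(\xi)$, not $\xi$ itself. This is harmless, because both sides of the equivalence depend on $\xi$ and $\xi^\prime$ only through $\eta(\xi)$ and $\eta(\xi^\prime)$.
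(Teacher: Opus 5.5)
Your proof is correct and follows the paper's route. The paper's proof does the Bayes-rule conjugacy computation giving the forward update $\eta(\xi^\prime) = \eta(\xi) + S(x)$, then notes that $\sigma = 0$ encodes the forward update and $\sigma = 1$ the reverse one, which is exactly your split of the symmetrized relation into its two disjuncts (and your convention $S(\varnothing) = 0$ matches how the paper implicitly treats $\varnothing$ when defining $\text{Im}(S)$ over $\bar{\mathcal{X}}$).
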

\begin{proof}
Using Bayes rule:
\begin{align*}
    p(\theta \mid x, \xi) &\propto p(x \mid \theta) \, p(\theta \mid \xi) \\
    & \propto \exp\big(S(x)^T T(\theta)\big) \, h(x) \, \exp \big( \eta(\xi)^T T(\theta) - A(\xi) \big) \\
    &\propto h(x) \exp \big( (\eta(\xi) + S(x))^T T(\theta) - A(\xi) \big)
\end{align*}
which proves the proposition. 
\end{proof}

In the corollary, the quantity $\sigma$ indicates the direction of the relation: $\sigma = 0$ corresponds to  $p(\theta \mid \xi) \rightrelation{x}{\mathcal{L}^S} p(\theta \mid \xi^\prime)$ and $\sigma = 1$ to  $p(\theta \mid \xi^\prime) \rightrelation{x}{\mathcal{L}^S} p(\theta \mid \xi)$. 

Note that, in this context, the exponential family is used as the prior, and not as the likelihood, as traditionally discussed in the literature~\cite{diaconis1979conjugate}.

\subsection{Characterization of Bayes-Based Analogies on the Exponential Family}

The characterization of Bayes-based analogies on the exponential family follows directly from Proposition~\ref{prop:definition-BBAn} and Corollary~\ref{cor:relation-condition-exp-fam}.

\begin{proposition}
For all hyperparameters $\xi_A, \xi_B, \xi_C$ and $\xi_D$, if
\[
p(. \mid \xi_A) : p(. \mid \xi_B) ::^{\mathcal{L}^S} p(. \mid \xi_C) : p(. \mid \xi_D)
\]
then either (1) the natural parameters $\eta(\xi)$ satisfy the arithmetic analogy; or (2) the reverse solution holds, i.e. $\eta(\xi_A)=\eta(\xi_D)$ and $\eta(\xi_B)=\eta(\xi_C)$.
\label{prop:characterization-ExpFam}
\end{proposition}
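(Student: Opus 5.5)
Unfolding the definition of $R$, the hypothesis gives observations $x_1, x_2 \in \bar{\mathcal{X}}$ witnessing the four symmetrized relations $p(\cdot\mid\xi_A) \leftrightrelation{x_1}{\mathcal{L}^S} p(\cdot\mid\xi_B)$, $p(\cdot\mid\xi_C) \leftrightrelation{x_1}{\mathcal{L}^S} p(\cdot\mid\xi_D)$, $p(\cdot\mid\xi_A) \leftrightrelation{x_2}{\mathcal{L}^S} p(\cdot\mid\xi_C)$ and $p(\cdot\mid\xi_B) \leftrightrelation{x_2}{\mathcal{L}^S} p(\cdot\mid\xi_D)$. We apply Corollary~\ref{cor:relation-condition-exp-fam} to each of them, extending it to $\varnothing$ by setting $S(\varnothing)=0$; this is consistent because updating with $\varnothing$ is the identity. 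This yields signs $\sigma_{AB},\sigma_{CD},\sigma_{AC},\sigma_{BD}\in\{0,1\}$ with
\begin{align*}
(-1)^{\sigma_{AB}}(\eta_B-\eta_A) &= S(x_1), & (-1)^{\sigma_{CD}}(\eta_D-\eta_C) &= S(x_1),\\
(-1)^{\sigma_{AC}}(\eta_C-\eta_A) &= S(x_2), & (-1)^{\sigma_{BD}}(\eta_D-\eta_B) &= S(x_2),
\end{align*}
where $\eta_i=\eta(\xi_i)$. Eliminating $S(x_1)$ and $S(x_2)$ gives $\eta_B-\eta_A=\epsilon_1(\eta_D-\eta_C)$ and $\eta_C-\eta_A=\epsilon_2(\eta_D-\eta_B)$ with $\epsilon_1,\epsilon_2\in\{\pm1\}$. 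This is the vector-valued analogue of the system in the proof of Proposition~\ref{prop:characterization-bayes-analogies}, now holding in the natural parameter space rather than pointwise in $\theta$.

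We then split into four cases on $(\epsilon_1,\epsilon_2)$.
\begin{itemize}
\item If $\epsilon_1=1$, the first equation is directly the arithmetic analogy $\eta_B-\eta_A=\eta_D-\eta_C$, so conclusion (1) holds.
\item If $\epsilon_1=-1$ and $\epsilon_2=1$, the second equation is $\eta_C-\eta_A=\eta_D-\eta_B$, which rearranges to $\eta_B-\eta_A=\eta_D-\eta_C$. This is again (1).
\item If $\epsilon_1=\epsilon_2=-1$, we get $\eta_A+\eta_B=\eta_C+\eta_D$ from the first equation and $\eta_A+\eta_C=\eta_B+\eta_D$ from the second. Subtracting gives $\eta_B=\eta_C$, and substituting back gives $\eta_A=\eta_D$. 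This is the reverse solution (2).
\end{itemize}
The degenerate equalities that arise inside these cases (for instance $\eta_A=\eta_B$ together with $\eta_C=\eta_D$) are themselves instances of the arithmetic analogy, so they need no separate treatment.

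The main obstacle is bookkeeping rather than mathematics. First, we must justify the convention $S(\varnothing)=0$ so that Corollary~\ref{cor:relation-condition-exp-fam} covers null observations. Second, we must check that the hypothesis means all four hyperparameters are taken within the same exponential family and the same $\mathcal{L}^S$, so that natural parameters can be compared directly; we take identifiability of $\eta$ at the level of the densities for granted, as Proposition~\ref{prop:conjugate-exponential} already does. Once these points are settled, the case analysis above completes the proof.
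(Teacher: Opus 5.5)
Your proof is correct and follows the paper's route: the paper's one-line proof combines Corollary~\ref{cor:relation-condition-exp-fam} with the sign-case algebra from the proof of Proposition~\ref{prop:characterization-bayes-analogies}, which is what you carry out. You do it directly on the natural parameters, where the signs are fixed once and for all, and this is slightly cleaner than passing through the pointwise-in-$\theta$ log-density statement. One harmless slip: in the case $\epsilon_1=\epsilon_2=-1$, the first equation gives $\eta_A+\eta_C=\eta_B+\eta_D$ and the second gives $\eta_A+\eta_B=\eta_C+\eta_D$ (you swapped them), which does not affect the conclusion $\eta_B=\eta_C$, $\eta_A=\eta_D$.
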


\begin{proof}
The proposition follows directly from Corollary~\ref{cor:relation-condition-exp-fam} and Proposition~\ref{prop:characterization-bayes-analogies}. 
\end{proof}


The necessary condition of the arithmetic proportion between natural parameters raises an important issue: the condition depends on the choice of the natural representation~$\eta$. The choice of a natural representation is not unique for a same distribution. In general, we can define an equivalence relation~$\sim$ between tuples $(T, \eta)$ and $(T^\prime, \eta^\prime)$ when there exists functions $\theta \mapsto u(\theta)$ and $\xi \mapsto v(\xi)$ such that:
\begin{equation}
    \eta(\xi)^T T(\theta) = \eta^\prime(\xi)^T T^\prime(\theta) + u(\theta) + v(\xi).
\end{equation}

With such a definition of equivalence, it is possible to extend a given sufficient statistic $T(\theta)$ with some constants, making the dimension of the sufficient statistics arbitrarily large. \citet{brown1986fundamentals} defines the minimality of a sufficient statistics $T(.)$ if they are affinely independent.

An important result is the independence to the choice of the minimal natural representation: 

\begin{proposition}
Let $(T, \eta)$ and $(T^\prime, \eta^\prime)$ such that $(T, \eta) \sim (T^\prime, \eta^\prime)$ and $T^\prime$ is a minimal sufficient statistic. If $\eta(\xi_A)$, $\eta(\xi_B)$, $\eta(\xi_C)$ and $\eta(\xi_D)$ are in arithmetic analogy, then $\eta^\prime(\xi_A)$, $\eta^\prime(\xi_B)$, $\eta^\prime(\xi_C)$ and $\eta^\prime(\xi_D)$ are also in arithmetic analogy. 
\end{proposition}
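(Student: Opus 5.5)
The strategy is to convert the arithmetic analogy on $\eta$ into a pointwise identity between functions of $\theta$. Then use the equivalence relation $\sim$ to move that identity to $(T^\prime,\eta^\prime)$, and finally use minimality of $T^\prime$ to recover an identity between coefficient vectors.

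\emph{Step 1 (pass to functions of $\theta$).} Assume $\eta(\xi_B)-\eta(\xi_A)=\eta(\xi_D)-\eta(\xi_C)$. Taking the inner product with $T(\theta)$ gives, for every $\theta$,
\[
\eta(\xi_A)^T T(\theta) - \eta(\xi_B)^T T(\theta) - \eta(\xi_C)^T T(\theta) + \eta(\xi_D)^T T(\theta) = 0 .
\]

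\emph{Step 2 (apply $\sim$).} Substitute $\eta(\xi)^T T(\theta) = \eta^\prime(\xi)^T T^\prime(\theta) + u(\theta) + v(\xi)$ for each of the four hyperparameters. The alternating signs $+,-,-,+$ sum to zero, so the four copies of $u(\theta)$ cancel. The $v$-terms leave only the constant $c = v(\xi_A)-v(\xi_B)-v(\xi_C)+v(\xi_D)$. Writing $\delta = \eta^\prime(\xi_A)-\eta^\prime(\xi_B)-\eta^\prime(\xi_C)+\eta^\prime(\xi_D)$, we obtain
\[
\delta^T T^\prime(\theta) + c = 0 \quad \text{for all } \theta .
\]

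\emph{Step 3 (use minimality).} This is the step that needs care. Following \citet{brown1986fundamentals}, a minimal $T^\prime$ has affinely independent components: the only vectors $a$ for which $a^T T^\prime(\theta)$ is constant in $\theta$ (almost everywhere with respect to the base measure) are $a=0$. Step 2 says exactly that $\delta^T T^\prime(\theta)$ is constant, equal to $-c$. Hence $\delta = 0$, which is the arithmetic analogy $\eta^\prime(\xi_B)-\eta^\prime(\xi_A)=\eta^\prime(\xi_D)-\eta^\prime(\xi_C)$. As a byproduct, $c=0$.

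The main obstacle is making Step 3 rigorous, because the paper defines minimality only informally. If the equivalence in $\sim$ holds only almost everywhere, I would state minimality accordingly, so that the constant in Step 2 is obtained almost everywhere and affine independence still forces $\delta=0$.

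Two further remarks. Only $T^\prime$ needs to be minimal; no hypothesis on $T$ is required, which matches the statement. The argument is also symmetric, so if both representations are minimal the arithmetic analogy holds in one if and only if it holds in the other.
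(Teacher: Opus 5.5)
Your proof is correct and follows the paper's argument essentially step for step: take the inner product of the arithmetic identity with $T(\theta)$, substitute the $\sim$ relation so that the $u(\theta)$ terms cancel and only a $\theta$-independent constant from $v$ remains, then use affine independence of the minimal $T^\prime$ to conclude that the coefficient vector vanishes. Your explicit treatment of the almost-everywhere version of minimality is a reasonable refinement that the paper leaves implicit.
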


\begin{proof}
Multiplying the arithmetic condition on $\eta(\xi)$ by $T(\theta)$ on the right, we obtain for all $\theta$:
\[
(\eta(\xi_B) - \eta(\xi_A) + \eta(\xi_C) - \eta(\xi_D))^T T(\theta) = 0
\]
Since $(T, \eta) \sim (T^\prime, \eta^\prime)$, this rewrites as:
\begin{align*}
&(\eta^\prime(\xi_B) - \eta^\prime(\xi_A) + \eta^\prime(\xi_C) - \eta^\prime(\xi_D))^T T^\prime(\theta) \\
&\qquad + v(\xi_B) - v(\xi_A) + v(\xi_C) - v(\xi_D) = 0
\end{align*}
We observe that the term on the second line is a constant w.r.t. $\theta$. 
Since $T^\prime$ is minimal, it is affinely independent, which guarantees $\eta^\prime(\xi_B) - \eta^\prime(\xi_A) + \eta^\prime(\xi_C) - \eta^\prime(\xi_D) = 0$. 
\end{proof}

Based on the last two propositions, we see that analogy is preserved by change of minimal sufficient statistics, and, consequently, independent of the chosen representation of the distribution. Based on this result, we know assume that we work exclusively with minimal statistics.

\subsection{Defining Valid Analogies}

The result of Proposition~\ref{prop:characterization-ExpFam} provides a necessary condition for distributions to be in analogy. We saw that it does not depend on the choice of the function $S(x)$ in the likelihood family. This condition is not necessarily sufficient: it assumes the existence of observations $x, x^\prime \in \mathcal{X}$ that satisfy the conditions of Corollary~\ref{cor:relation-condition-exp-fam}, which is not always guaranteed. In order to introduce these, we consider the image of $S$, written $\ImS$, and defined as the set $\ImS = \lbrace S(x) : x \in \bar{\mathcal{X}} \rbrace$. Using the result of Proposition~\ref{prop:characterization-ExpFam}, we can express a full characterization of Bayes-based analogies in an exponential family using the natural parameters.

\begin{theorem}
For any $\xi_A, \xi_B, \xi_C, \xi_D \in \Xi$, we have
\[
p(. \mid \xi_A) : p(. \mid \xi_B) ::^{\mathcal{L}^S} p(. \mid \xi_C) : p(. \mid \xi_D)
\]
if and only if $\lbrace \eta(\xi_A) - \eta(\xi_B), \eta(\xi_B) - \eta(\xi_A) \rbrace \cap \ImS$ and $\lbrace \eta(\xi_C) - \eta(\xi_A), \eta(\xi_A) - \eta(\xi_C) \rbrace \cap \ImS$ are not empty, and either of the two following conditions is satisfied:
\begin{enumerate}
    \item $\eta(\xi_A) = \eta(\xi_D)$, $\eta(\xi_B) = \eta(\xi_C)$; or 
    \item $\eta(\xi_B) - \eta(\xi_A) = \eta(\xi_D) - \eta(\xi_C)$. 
\end{enumerate}
\label{thm:valid-analogies-expfamily}
\end{theorem}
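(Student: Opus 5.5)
The plan is to prove the two directions separately. Both work entirely at the level of natural parameters, through Corollary~\ref{cor:relation-condition-exp-fam}. Throughout I write $\eta_i = \eta(\xi_i)$ for $i \in \lbrace A,B,C,D\rbrace$. By Corollary~\ref{cor:relation-condition-exp-fam}, $p(.\mid\xi)\leftrightrelation{x}{\mathcal{L}^S}p(.\mid\xi^\prime)$ holds exactly when $\eta(\xi^\prime)-\eta(\xi)\in\lbrace S(x),-S(x)\rbrace$. I adopt the convention $S(\varnothing)=0$, so that the null observation also fits this characterization and $0\in\ImS$. Consequently, two distributions are related by some observation if and only if $\lbrace \eta(\xi^\prime)-\eta(\xi),\,\eta(\xi)-\eta(\xi^\prime)\rbrace\cap\ImS\neq\emptyset$.

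\emph{Necessity.} Assume the analogy holds, with witnesses $x_1,x_2\in\bar{\mathcal X}$. The relations $p_A\leftrightrelation{x_1}{\mathcal{L}^S}p_B$ and $p_A\leftrightrelation{x_2}{\mathcal{L}^S}p_C$ immediately give the two non-emptiness conditions, because $\pm S(x_1)$ and $\pm S(x_2)$ lie in the respective sets. For the dichotomy I would appeal to Proposition~\ref{prop:characterization-ExpFam}. To keep the argument self-contained, I would also redo its sign analysis directly on the $\eta$'s. There are $\sigma_{AB},\sigma_{CD},\sigma_{AC},\sigma_{BD}\in\lbrace0,1\rbrace$ with
\[
\eta_B-\eta_A=(-1)^{\sigma_{CD}-\sigma_{AB}}(\eta_D-\eta_C),\qquad \eta_C-\eta_A=(-1)^{\sigma_{BD}-\sigma_{AC}}(\eta_D-\eta_B).
\]
I would then split into the four sign patterns.
\begin{enumerate}
\item Both signs are $+$: this gives condition~2 directly.
\item Both signs are $-$: adding the two equations yields $\eta_D=\eta_A$, and then $\eta_B=\eta_C$, which is condition~1.
\item The mixed cases force $\eta_A=\eta_C$ and $\eta_B=\eta_D$, or $\eta_A=\eta_B$ and $\eta_C=\eta_D$. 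In both situations $\eta_B-\eta_A=\eta_D-\eta_C$ holds trivially, so they fall under condition~2.
\end{enumerate}

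\emph{Sufficiency.} Assume the non-emptiness conditions hold. Choose $x_1$ with $S(x_1)=\pm(\eta_B-\eta_A)$ and $x_2$ with $S(x_2)=\pm(\eta_C-\eta_A)$; both choices are possible because $\varnothing$ is allowed and $\ImS$ includes $S(\varnothing)$. By Corollary~\ref{cor:relation-condition-exp-fam}, $p_A\leftrightrelation{x_1}{}p_B$ and $p_A\leftrightrelation{x_2}{}p_C$. It remains to check the other two relations with the same observations.
\begin{enumerate}
\item Under condition~2, $\eta_D-\eta_C=\eta_B-\eta_A$ and $\eta_D-\eta_B=\eta_C-\eta_A$. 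Hence $p_C\leftrightrelation{x_1}{}p_D$ and $p_B\leftrightrelation{x_2}{}p_D$, using the same sign $\sigma$ as before.
\item Under condition~1, $\eta_D-\eta_C=\eta_A-\eta_B=-(\eta_B-\eta_A)$ and $\eta_D-\eta_B=\eta_A-\eta_C$. The same observations therefore work with the opposite direction $\sigma$. This is exactly where the symmetrized relation $\leftrightsquigarrow$ is needed.
\end{enumerate}
In both cases the four relations in the definition of $R$ hold, which proves the analogy.

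The main point requiring care is not computational. It is checking that the statement's quantifiers fully match the definition. First, the witness of $A\leftrightsquigarrow B$ must be reused for $C\leftrightsquigarrow D$, so the sign flexibility has to be used explicitly in the reverse case. Second, the degenerate mixed-sign cases of the necessity direction must be absorbed into condition~2 rather than treated as new cases. Finally, I would state the convention $S(\varnothing)=0$ explicitly, since otherwise trivial analogies such as $\eta_A=\eta_B$ might not be covered by $\ImS$.
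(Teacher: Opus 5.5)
Your proposal is correct and follows essentially the same route as the paper. For necessity, you use the sign case analysis that the paper delegates to Proposition~\ref{prop:characterization-ExpFam}, redone directly on the natural parameters with the mixed-sign cases absorbed into condition~2; for sufficiency, you pick witnesses from the two non-emptiness conditions and check the remaining two relations with the same or reversed orientation, where your explicit convention $S(\varnothing)=0$ is exactly what the paper assumes implicitly by defining $\ImS$ over $\bar{\mathcal{X}}$.
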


\begin{proof}
Considering that the analogy holds, we can deduce from Proposition~\ref{prop:characterization-ExpFam} that either $\eta_A = \eta_D$ and $\eta_B = \eta_C$, or that $\eta_B - \eta_A = \eta_D - \eta_C$ (using the simplified notation $\eta_i$ for $\eta(\xi_i)$). The conditions on the sets follows directly from the definition. 

Assuming that $\lbrace \eta(\xi_A) - \eta(\xi_B), \eta(\xi_B) - \eta(\xi_A) \rbrace \cap \ImS$ and $\lbrace \eta(\xi_C) - \eta(\xi_A), \eta(\xi_A) - \eta(\xi_C) \rbrace \cap \ImS$ are not empty, there exists $\sigma_{AB}, \sigma_{AC} \in \lbrace 0, 1 \rbrace$ and $x, x^\prime \in \bar{\mathcal{X}}$ such that $(-1)^{\sigma_{AB}} (\eta_B - \eta_A) = S(x)$ and $(-1)^{\sigma_{AC}} (\eta_C - \eta_A) = S(x^\prime)$.  

If $\eta_D = \eta_A$ and $\eta_C = \eta_B$, we have $\eta_D - \eta_C = \eta_A - \eta_B = (-1)^{1+ \sigma_{AB}} S(x)$. Additionally, $\eta_C - \eta_A = \eta_B - \eta_A = (-1)^{\sigma_{AB}} S(x)$ and
$\eta_D - \eta_B = \eta_A - \eta_B = (-1)^{\sigma_{AB} + 1} S(x)$. By definition, this shows that the analogy holds.

If $\eta_B - \eta_A = \eta_D - \eta_C$, we have $\eta_D - \eta_C = (-1)^{\sigma_{AB}} S(x)$ and $\eta_D - \eta_B = \eta_C - \eta_A = (-1)^{\sigma_{AC}} S(x^\prime)$. By definition, the analogy holds.
\end{proof}

In this theorem, the sufficient statistic $S$ governs the existence of solutions through its image. Hence, the expressive power of the analogy relation is directly tied to the geometry of $\ImS$: richer images allow a larger set of valid analogies.

\begin{proposition}
Given two likelihoods $\mathcal{L}_S$ and $\mathcal{L}_{S^\prime}$, if $\ImS \subseteq \text{Im}(S^\prime)$, then four distributions in analogy with respect to $\mathcal{L}_S$ are also in analogy with respect to $\mathcal{L}_{S^\prime}$. 
\end{proposition}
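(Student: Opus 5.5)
The plan is to argue directly from Theorem~\ref{thm:valid-analogies-expfamily}, which characterizes Bayes-based analogies in the exponential family by two ingredients. The first is a purely arithmetic condition on the natural parameters $\eta(\xi_A),\dots,\eta(\xi_D)$, namely the reverse solution or the arithmetic analogy. The second is a membership condition: the two sets $\lbrace \eta_A - \eta_B, \eta_B - \eta_A \rbrace$ and $\lbrace \eta_C - \eta_A, \eta_A - \eta_C \rbrace$ must each meet the image of the sufficient statistic. The key observation is that only the second ingredient depends on the likelihood. The arithmetic condition involves only $\eta$, and $\eta$ is determined by the prior family, not by $S$. Throughout I assume, as the statement implicitly does, that both likelihoods $\mathcal{L}_S$ and $\mathcal{L}_{S^\prime}$ couple to the same prior sufficient statistic $T(\theta)$, so that Proposition~\ref{prop:conjugate-exponential} applies to both with the same natural parametrization $\eta$.

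The steps, in order, are as follows. First, assume $p(.\mid\xi_A) : p(.\mid\xi_B) ::^{\mathcal{L}_S} p(.\mid\xi_C) : p(.\mid\xi_D)$. Second, apply the ``only if'' direction of Theorem~\ref{thm:valid-analogies-expfamily} with likelihood $\mathcal{L}_S$. This yields that $\lbrace \eta_A - \eta_B, \eta_B - \eta_A \rbrace \cap \ImS \neq \emptyset$ and $\lbrace \eta_C - \eta_A, \eta_A - \eta_C \rbrace \cap \ImS \neq \emptyset$, and that condition 1 or condition 2 of the theorem holds. Third, use the inclusion $\ImS \subseteq \text{Im}(S^\prime)$ to conclude that the same two sets meet $\text{Im}(S^\prime)$, since any element in the first intersection also lies in the second. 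Fourth, note that condition 1 or 2 is unchanged, because it does not involve $S$. Finally, apply the ``if'' direction of Theorem~\ref{thm:valid-analogies-expfamily} with likelihood $\mathcal{L}_{S^\prime}$ to obtain the analogy with respect to $\mathcal{L}_{S^\prime}$.

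One subtlety needs care. $\ImS$ is defined over $\bar{\mathcal{X}}$, so it includes the null observation, which must be mapped to $0$. This is consistent with Corollary~\ref{cor:relation-condition-exp-fam}, where $\varnothing$ acts as the identity. The inclusion of images then also covers the trivial updates, and no separate case is needed.

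I expect the main obstacle to be not the logic, which is a one-line monotonicity argument, but making precise that both likelihoods live over the same prior family with the same $T$, hence the same $\eta$. If the two likelihoods coupled to different statistics $T$, the comparison of $\ImS$ and $\text{Im}(S^\prime)$ would not even be meaningful. So I would state this as the standing setting of the section rather than prove it.

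An alternative that avoids the full theorem is to argue directly from the definition of $R$ and Corollary~\ref{cor:relation-condition-exp-fam}. Each witness $x_i$ under $\mathcal{L}_S$ gives $(-1)^\sigma(\eta^\prime-\eta) = S(x_i)$ for the relevant pair of parameters. By the inclusion, there is an $x_i^\prime$ with $S^\prime(x_i^\prime)=S(x_i)$, and $x_i^\prime$ witnesses the same four relations under $\mathcal{L}_{S^\prime}$. I would present this direct version, since it bypasses the case analysis in Theorem~\ref{thm:valid-analogies-expfamily}.
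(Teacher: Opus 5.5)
Your proposal is correct. The paper gives no written proof of this proposition. It presents it as an immediate consequence of Theorem~\ref{thm:valid-analogies-expfamily}, via the remark that only the membership conditions on $\ImS$ depend on the likelihood. That is exactly your first route: the only-if direction with $S$, then the inclusion of images, then the if direction with $S^\prime$.

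The direct version you say you would actually present is a mild but genuine variation. By Corollary~\ref{cor:relation-condition-exp-fam}, whether an observation witnesses $p \leftrightsquigarrow q$ depends on that observation only through its statistic. So each witness $x_i$ for $\mathcal{L}_S$ can be replaced by any $x_i^\prime$ with $S^\prime(x_i^\prime) = S(x_i)$, and all four relations transfer without invoking the arithmetic/reverse dichotomy of Proposition~\ref{prop:characterization-bayes-analogies}.

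What this buys you:
\begin{itemize}
\item It is shorter and skips that case analysis entirely.
\item It extends beyond exponential families. The conclusion holds for arbitrary likelihoods $\mathcal{L}, \mathcal{L}^\prime$ whenever every function $\theta \mapsto \mathcal{L}(x \mid \theta)$ coincides, up to a positive constant factor, with some $\theta \mapsto \mathcal{L}^\prime(x^\prime \mid \theta)$. In that case the Bayesian updates coincide.
\end{itemize}

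The theorem-based route, by contrast, makes visible that the arithmetic part of the characterization is likelihood-independent. That is the point the paper relies on when it defines the partial order on analogies right afterwards.

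Your standing assumptions are exactly the paper's implicit setting: both likelihoods couple to the same $T$ (hence the same $\eta$), and $S(\varnothing) = S^\prime(\varnothing) = 0$. They are needed for the comparison of images to be meaningful at all.
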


As a direct consequence of this proposition, we can define a partial order between analogies, induced by the choices of $S$. This order corresponds to the inclusion of analogies seen as subsets of $\mathcal{P}^
4$. We call \emph{maximal} an analogy associated to a likelihood $S_{\max}$ such that $\ImS \subseteq \text{Im}(S_{\max})$ for all $S$. 




\subsection{Divergence-Based Characterization of Analogies}

An alternative, coordinate-free characterization of Bayes-based analogies can be established using the Kullback-Leibler divergence. Because the KL divergence in a minimal exponential family equals the Bregman divergence generated by the log-partition function $A(\eta)$, the linear arithmetic structure maps directly to information geometry.

\begin{proposition}
Denote $M(p, q) = \arg\min_r (D_{\text{KL}}( p\| r) + D_{\text{KL}}( q\| r))$. If four distributions $p_A, p_B, p_C$ and $p_D$ of the exponential family are in analogy, then either $M(p_A, p_D) = M(p_B, p_C)$ or $M(p_A, p_C) = M(p_B, p_D)$.
\end{proposition}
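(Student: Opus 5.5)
The plan is to reduce the statement to the characterization already obtained and then compute the minimizer $M$ explicitly in natural coordinates. By Proposition~\ref{prop:characterization-ExpFam} (or Theorem~\ref{thm:valid-analogies-expfamily}), if $p_A : p_B ::^{\mathcal{L}^S} p_C : p_D$ with minimal statistics, then either (1) $\eta_B - \eta_A = \eta_D - \eta_C$, or (2) $\eta_A = \eta_D$ and $\eta_B = \eta_C$, where $\eta_i = \eta(\xi_i)$. Since minimality makes $\eta \mapsto p(\cdot\mid\eta)$ injective, equality of distributions is the same as equality of natural parameters. It therefore suffices to describe $M$ as a function of natural parameters and to check the conclusion in each of the two cases. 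Throughout, I take the $\arg\min$ over $r$ in the same exponential family, which is the setting in which the statement is posed.

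\textbf{Step 1: closed form of $M$.} For members of a minimal exponential family, $D_{\text{KL}}(p_1\|p_2) = A(\eta_2) - A(\eta_1) - \nabla A(\eta_1)^T(\eta_2 - \eta_1)$, which is the Bregman divergence generated by $A$. Hence $r \mapsto D_{\text{KL}}(p\|r) + D_{\text{KL}}(q\|r)$ is strictly convex in $\eta_r$, because $A$ is strictly convex under minimality. Its stationarity condition is
\[
\nabla A(\eta_r) = \tfrac12\big(\nabla A(\eta_p) + \nabla A(\eta_q)\big).
\]
So $M(p,q)$ is the unique member of the family whose mean parameter is the average of the mean parameters of $p$ and $q$. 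In particular $M(p,q) = M(q,p)$, and $M(p,q)$ depends only on the pair $\{\eta_p, \eta_q\}$.

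\textbf{Step 2: the reverse case.} Assume $\eta_A = \eta_D$ and $\eta_B = \eta_C$. Then $p_C = p_B$ and $p_D = p_A$. By symmetry of $M$,
\[
M(p_A, p_C) = M(p_A, p_B) = M(p_D, p_B) = M(p_B, p_D),
\]
so the second alternative of the statement holds. This case needs nothing beyond Step~1.

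\textbf{Step 3: the arithmetic case, which I expect to be the main obstacle.} Here $\eta_A + \eta_D = \eta_B + \eta_C$, so the two diagonals of the parallelogram share their midpoint in natural coordinates. By Step~1, however, $M$ averages \emph{mean} parameters. The first alternative therefore requires
\[
\nabla A(\eta_A) + \nabla A(\eta_D) = \nabla A(\eta_B) + \nabla A(\eta_C),
\]
and the second requires $\nabla A(\eta_A) + \nabla A(\eta_C) = \nabla A(\eta_B) + \nabla A(\eta_D)$. My first attempt is to check the case where $\nabla A$ is affine on the affine span of $\{\eta_A, \eta_B, \eta_C\}$, for instance a Gaussian family with fixed covariance where $A$ is quadratic. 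In that case the equality of natural midpoints transfers directly to mean midpoints, and the first alternative holds.

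For a general $A$ I would next try a second-order expansion of $\nabla A$ around the common natural midpoint $m$. Writing $\eta_A = m - u$, $\eta_D = m + u$, $\eta_B = m - v$, $\eta_C = m + v$, the first-order terms cancel on both sides. The remaining discrepancy is $\nabla^3 A(m)[u,u] - \nabla^3 A(m)[v,v]$ plus higher-order terms. This shows the identity is not automatic, because the third cumulant of $T$ enters.

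So the crux is whether this discrepancy, or the analogous one for the second alternative, must vanish on analogies. Before going further I would test this on a one-dimensional family such as the Exponential or Poisson family, choosing a non-degenerate parallelogram $u \neq \pm v$. If the identity fails there, the proof can only be completed in either of two ways: by restricting to families with affine $\nabla A$, or by reading $M$ with the Bregman divergence in its other argument order, which corresponds to the reverse KL $D_{\text{KL}}(r\|p)$. With that reading, $M$ is exactly the natural-parameter midpoint, and the arithmetic case yields $M(p_A,p_D) = M(p_B,p_C)$ immediately.
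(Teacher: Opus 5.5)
Your Step~1 is correct, and it is exactly where the paper's own proof slips. The paper asserts $\eta(M(p,q))=\frac12(\eta(p)+\eta(q))$. But $D_{\text{KL}}(p\|r)$ is the Bregman divergence of $A$ with $\eta_r$ in the first slot, so minimizing $D_{\text{KL}}(p\|r)+D_{\text{KL}}(q\|r)$ averages mean parameters (moment matching), not natural parameters.

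The test you defer settles Step~3 negatively, and it can be run inside the paper's own maximal Gaussian example. Take $d=1$ with the regression likelihood, so every $(a,-b/2)$ with $a\in\mathbb{R}$, $b>0$ lies in $\ImS$. Let $p_A,p_B,p_C,p_D$ be centred normals with precisions $1,2,3,4$. Then $\eta_i=(0,-\tau_i/2)$, and
\[
\eta_B-\eta_A=\eta_D-\eta_C=(0,-\tfrac12),\qquad \eta_C-\eta_A=\eta_D-\eta_B=(0,-1).
\]
Both differences lie in $\ImS$: they are the updates by a single regression observation with $B=1$ and $B=2$ and $y=0$. So the quadruple is a genuine analogy by Theorem~\ref{thm:valid-analogies-expfamily}. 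However, the forward-KL $M$ of two centred normals is the centred normal whose variance is the average of the two variances. This gives $\frac58$ for $M(p_A,p_D)$ against $\frac5{12}$ for $M(p_B,p_C)$, and $\frac23$ for $M(p_A,p_C)$ against $\frac38$ for $M(p_B,p_D)$. Neither alternative holds. Letting $r$ range over all densities does not help: the minimizer becomes the mixture $\frac12(p+q)$, whose variance is the same average.

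So the statement is false as written, and no argument can close your Step~3. The real gap in your proposal is only that you stop at ``if the identity fails'' instead of recording this counterexample. Your second repair is the right one, and it is what the paper's one-line argument actually proves. Since
\[
D_{\text{KL}}(r\|p)+D_{\text{KL}}(r\|q)=2\,D_{\text{KL}}(r\|\bar r)-2\log Z,\qquad \bar r=\sqrt{pq}/Z,\quad Z=\int\sqrt{pq}\le 1,
\]
the reverse-KL minimizer is the normalized geometric mean. It lies in the family with natural parameter $\frac12(\eta_p+\eta_q)$, and it is the minimizer over the family and over all densities alike. With that definition, the arithmetic case gives $M(p_A,p_D)=M(p_B,p_C)$ at once, and your Step~2 handles the reverse case unchanged. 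Your first repair (affine $\nabla A$, i.e.\ Gaussians with fixed covariance, where the two centroids coincide) is also valid, but much narrower.
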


\begin{proof}
For a minimal sufficient statistic $\eta$, we can verify that $\eta(M(p, q)) = \frac{1}{2}(\eta(p) + \eta(q))$. We have the analogy if $\eta(p_A) = \eta(p_D)$ and $\eta(p_B) = \eta(p_C)$, in which case $M(p_A, p_C) = M(p_B, p_D)$, or if $\eta(p_B) - \eta(p_A) = \eta(p_D) - \eta(p_C)$, in which case $M(p_A, p_D) = M(p_B, p_C)$. 
\end{proof}

The converse implications do not hold in general: equality of the corresponding barycenters entails equality of the natural-parameter displacements, but does not ensure that their common value belongs to $\ImS$.

This result states that every valid analogy induces a common KL barycenter for one of these two pairings of distributions. This characterization by the barycenters is a well-known property of arithmetic analogies in $\mathbb{R}^d$, used for instance by \citet{lepage2024any} as a definition of analogies.

\subsection{Analogical Equations}

Theorem~\ref{thm:valid-analogies-expfamily} characterizes the quadruples of distributions that are in analogy. The problem of defining the support $\Supp{\mathcal{L}_S}$ of analogical equations on an exponential family is closely related. 

\begin{theorem}
The support $\Supp{\mathcal{L}_S}$ is the set of triples of distributions $(p_A, p_B, p_C)$ such that $\lbrace \eta(\xi_A) - \eta(\xi_B), \eta(\xi_B) - \eta(\xi_A) \rbrace \cap \ImS$ and $\lbrace \eta(\xi_C) - \eta(\xi_A), \eta(\xi_A) - \eta(\xi_C) \rbrace \cap \ImS$ are not empty, and either $\eta(\xi_C) = \eta(\xi_B)$ or $\eta(\xi_C) + \eta(\xi_B) - \eta(\xi_A) \in \eta(\Xi)$. 
\end{theorem}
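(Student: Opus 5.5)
The plan is to derive the statement directly from Theorem~\ref{thm:valid-analogies-expfamily}, which characterizes exactly when a quadruple is in analogy. By definition, $(p_A, p_B, p_C) \in \Supp{\mathcal{L}_S}$ if and only if there exists $\xi_D \in \Xi$ such that $p(.\mid\xi_A) : p(.\mid\xi_B) ::^{\mathcal{L}^S} p(.\mid\xi_C) : p(.\mid\xi_D)$. The proof is therefore an existential quantification over $\xi_D$ of the conditions in that theorem. Throughout we write $\eta_i$ for $\eta(\xi_i)$ and work with a minimal representation, as agreed after the representation-independence proposition.

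\emph{Necessity.} Assume a solution $p(.\mid\xi_D)$ exists. Theorem~\ref{thm:valid-analogies-expfamily} immediately gives the two non-emptiness conditions on $\{\eta_A-\eta_B,\eta_B-\eta_A\}\cap\ImS$ and $\{\eta_C-\eta_A,\eta_A-\eta_C\}\cap\ImS$. These involve only $A, B, C$. It also gives one of two alternatives:
\begin{itemize}
\item In case (1), $\eta_B=\eta_C$, which is the first disjunct.
\item In case (2), $\eta_D=\eta_C+\eta_B-\eta_A$. Since $\eta_D=\eta(\xi_D)$ with $\xi_D\in\Xi$, this vector lies in $\eta(\Xi)$, which is the second disjunct.
\end{itemize}

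\emph{Sufficiency.} Assume the two intersection conditions hold. We split according to the disjunct.
\begin{itemize}
\item If $\eta_C+\eta_B-\eta_A\in\eta(\Xi)$, pick $\xi_D\in\Xi$ with $\eta(\xi_D)=\eta_C+\eta_B-\eta_A$. Then condition 2 of Theorem~\ref{thm:valid-analogies-expfamily} holds, and together with the intersection hypotheses that theorem yields the analogy, so $p(.\mid\xi_D)$ is a solution.
\item If $\eta_C=\eta_B$, take $\xi_D=\xi_A$, which belongs to $\Xi$ trivially. Then $\eta_D=\eta_A$ and $\eta_C=\eta_B$, so condition 1 of the theorem holds and the analogy follows again.
\end{itemize}
This second case is the reason the first disjunct appears separately. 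When $\eta_C=\eta_B$, the candidate $2\eta_B-\eta_A$ may fall outside $\eta(\Xi)$, yet the reverse (Klein-type) solution still exists.

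The argument is essentially bookkeeping, and I expect no serious obstacle. The only delicate point is making sure that the chosen $\xi_D$ genuinely lies in the parameter set $\Xi$. This is exactly why membership in $\eta(\Xi)$ is stated explicitly: the natural parameter space need not be closed under the affine combination $\eta_C+\eta_B-\eta_A$, for instance when it is a cone of positive parameters. A secondary point is that distributions are identified through $\eta$, which is injective for a minimal representation, so the choice of $\xi_D$ realizing the required natural parameter determines a well-defined distribution.
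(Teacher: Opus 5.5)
Your proposal is correct and matches the paper's proof. The paper proves sufficiency by taking $\xi_D=\xi_A$ when $\eta(\xi_B)=\eta(\xi_C)$ and otherwise choosing $\eta(\xi_D)=\eta(\xi_C)+\eta(\xi_B)-\eta(\xi_A)$, then invoking Theorem~\ref{thm:valid-analogies-expfamily}, and it reads necessity directly off that theorem. Your explicit case split in the necessity direction simply spells out what the paper leaves implicit.
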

\begin{proof}
Suppose $(p_A, p_B, p_C)$ satisfies the conditions. If $\eta(\xi_B) = \eta(\xi_C)$, we define $\xi_D = \xi_A$, and otherwise we take $\xi_D$ such that $\eta(\xi_D) = \eta(\xi_C) + \eta(\xi_B) - \eta(\xi_A)$ (which is possible by hypothesis). This defines an analogy by Theorem~\ref{thm:valid-analogies-expfamily}, and consequently $(p_A, p_B, p_C) \in \Supp{\mathcal{L}_S}$. 
Conversely, if $(p_A, p_B, p_C) \in \Supp{\mathcal{L}_S}$, there exists $p_D$ such that the analogy holds. We have our result directly by Theorem~\ref{thm:valid-analogies-expfamily}. 
\end{proof}

%% file: sections/examples-exponential-family.tex

The previous section established a complete characterization of Bayes-based analogies for exponential families. The remaining question is therefore not whether analogies exist, but how rich they are for a given statistical model. According to Theorem~\ref{thm:valid-analogies-expfamily}, this richness is entirely determined by the image of the sufficient statistic of the likelihood. We now illustrate this framework on three classical exponential families. Besides demonstrating the generality of the approach, these examples highlight three qualitatively different situations regarding the expressive power of Bayes-based analogies.

\subsection{Bayes-Based Analogies on Bernoulli and Categorical Distributions}

The categorical distribution is a probability distribution over a finite observation space $\mathcal{O} = \lbrace o_1, \dotsc, o_K \rbrace$, parameterized by the probabilities $\pi_k$ that every observation $o_k$ can take. The space of categorical distributions corresponds to the simplex, i.e. the space of vectors $(\pi_1, \dotsc, \pi_K)$ such that $\pi_k \geq 0$ for all $k$ and $\sum_k \pi_k = 1$. In this paper,we restrict our attention to strictly positive probability vectors, i.e., $\pi_k > 0$ for every $k$, and denote the corresponding parameter space by $\Delta^{K}$. The special case $K = 2$ corresponds to the Bernoulli distribution.

\paragraph{Natural parameters.} A standard choice for the minimal parameters is the vector of log-ratios:
\begin{equation}
    \eta(\xi) =  \left( \log\frac{\pi_1}{\pi_K}, \dotsc, \log\frac{\pi_{K-1}}{\pi_K} \right) \in \mathbb{R}^{K-1}
    \label{eqn:natural-parameters-categorical}
\end{equation}
The corresponding natural statistics $T(\theta)$ are then given by:
\begin{equation}
    T(\theta) = \big( \mathbb{I}(\theta = 1), \dotsc, \mathbb{I}(\theta = K - 1) \big)
    \label{eqn:sufficient-statistics-categorical}
\end{equation}

\paragraph{Compatible likelihood.} Given $K$ distributions $p_1, \dotsc, p_K$, we define the likelihood $\mathcal{L}_{MM}^K$ as a special case of $\mathcal{L}^S$ with:
\begin{equation}
    S(x) = \left( \log \frac{p_1(x)}{p_K(x)}, \dotsc, \log \frac{p_{K-1}(x)}{p_K(x)} \right).
\end{equation}
We note that this distribution corresponds to a mixture model with distributions $p_1, \dotsc, p_K$, and that $\pi_k$ is then the probability of the point to be generated by $p_k$. 


\paragraph{Maximal analogy.} The existence of a maximal analogy follows from the existence of parameters s.t. $\ImS = \mathbb{R}^{K-1}$. 

\begin{proposition}
There exists a maximal Bayes-based proportional analogy between categorical distributions induced by the likelihood
$\mathcal L_{\MM}^K$.
\end{proposition}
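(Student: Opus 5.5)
The plan is to reduce maximality to a statement about images. By the preceding definition, an analogy induced by $S_{\max}$ is maximal as soon as $\ImS \subseteq \text{Im}(S_{\max})$ for every admissible $S$. For categorical distributions in the minimal representation \eqref{eqn:natural-parameters-categorical}, any compatible $S$ takes values in $\mathbb{R}^{K-1}$, so $\ImS \subseteq \mathbb{R}^{K-1}$ automatically. It therefore suffices to exhibit component distributions $p_1, \dotsc, p_K$ on some observation space $\mathcal{X}$ such that the map
\[
x \mapsto \left( \log \frac{p_1(x)}{p_K(x)}, \dotsc, \log \frac{p_{K-1}(x)}{p_K(x)} \right)
\]
is surjective onto $\mathbb{R}^{K-1}$. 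The null observation $\varnothing$ only adds the zero vector, which will already be attained. Once this is shown, the preceding proposition on inclusion of images gives that every analogy for any other $\mathcal{L}^S$ is also an analogy for $\mathcal{L}_{\MM}^K$, which is maximality.

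The construction I would use takes $\mathcal{X} = \mathbb{R}^{K-1}$ and lets $p_k$ be Gaussians with identity covariance and distinct means $\mu_k$. Then
\[
\log \frac{p_k(x)}{p_K(x)} = (\mu_k - \mu_K)^T x - \tfrac12\big(\|\mu_k\|^2 - \|\mu_K\|^2\big),
\]
which is affine in $x$. Choosing $\mu_K = 0$ and $\mu_k = e_k$, the canonical basis vectors, for $k \le K-1$ gives $S(x) = x - \tfrac12 \mathbf{1}$. This is a bijection of $\mathbb{R}^{K-1}$, so $\ImS = \mathbb{R}^{K-1}$. Any family whose log-ratios form an affine map with invertible linear part works equally well.

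The main obstacle is conceptual rather than computational: making precise that ``all $S$'' ranges only over sufficient statistics whose values lie in the natural-parameter space $\mathbb{R}^{K-1}$. By Proposition~\ref{prop:conjugate-exponential}, $S(x)$ is added to $\eta(\xi)$, so it must live in that space. With the minimal representation fixed as agreed earlier, the inclusion $\ImS \subseteq \mathbb{R}^{K-1} = \text{Im}(S_{\MM})$ is then immediate and the argument closes. I would also briefly check that the posterior remains in $\Delta^K$. This holds because the natural parameters range over all of $\mathbb{R}^{K-1}$, so $\eta(\Xi) = \mathbb{R}^{K-1}$ and every update stays within the family.
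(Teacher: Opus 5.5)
Your proposal is correct and follows essentially the paper's route: Gaussian components with a shared covariance make the log-ratio statistic an affine map of $\mathbb{R}^{K-1}$ with invertible linear part, so $\ImS = \mathbb{R}^{K-1}$, and your choice of identity covariance with $\mu_K = 0$, $\mu_k = e_k$ is a concrete instance of the paper's condition that the $\mu_k - \mu_K$ form a basis. Your extra remarks make explicit what the paper leaves implicit: why surjectivity onto $\mathbb{R}^{K-1}$ yields maximality, why the null observation adds nothing new, and why updates stay inside $\Delta^K$.
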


\begin{proof}
Let
$p_k=\mathcal N(\mu_k,\Sigma)$,
where $\Sigma$ is positive definite and
$\mu_1-\mu_K,\dots,\mu_{K-1}-\mu_K$
form a basis of $\mathbb R^{K-1}$.
Then $S(x) = A x + b$ for some invertible matrix $A$ and vector $b$, so that $\ImS = \mathbb R^{K-1}$. 
\end{proof}

Note that all the results above apply also to the case of Bernoulli distribution $\Ber(\pi)$, that is a categorical distribution with $K = 2$. In that case, the natural parameter is simply the logit of~$\pi$.

\subsection{Bayes-Based Analogies on Multivariate Normal Distributions}

A multivariate normal distribution $\mathcal{N}(\boldsymbol{\mu}, \boldsymbol{\Sigma})$ is a probability distribution over the parameter space $\Theta = \mathbb{R}^d$ characterized by its mean vector $\boldsymbol{\mu} \in \mathbb{R}^d$ and covariance matrix $\boldsymbol{\Sigma} \in \mathbb{S}_{++}^d$, where $\mathbb{S}_{++}^d$ denotes the set of symmetric positive definite matrices. Its density with respect to $\boldsymbol{\theta}$ is:
\[
p(\boldsymbol{\theta} \mid \boldsymbol{\mu}, \boldsymbol{\Sigma}) = \frac{1}{\sqrt{(2 \pi)^d |\boldsymbol{\Sigma}| }} \exp \left(- \frac{1}{2} (\boldsymbol{\theta} - \boldsymbol{\mu})^T \boldsymbol{\Sigma}^{-1} (\boldsymbol{\theta} - \boldsymbol{\mu}) \right). 
\]

\paragraph{Natural parameters.}
Following our minimal exponential family formalism, a standard choice for the natural parameters is $\eta(\boldsymbol{\mu}, \boldsymbol{\Sigma}) = (\boldsymbol{\Sigma}^{-1} \boldsymbol{\mu}, - \frac{1}{2}\text{vec}(\boldsymbol{\Sigma}^{-1}))$. The corresponding minimal sufficient statistics are $T(\boldsymbol{\theta}) = (\boldsymbol{\theta}, \text{vec}(\boldsymbol{\theta} \boldsymbol{\theta}^T))$. 

\paragraph{Compatible likelihood.}
We define the class $\mathcal{L}^{\mathrm{quad}}_{A,B}$ of likelihoods over observations $x \in \mathcal{X}$ of the form:
\begin{equation}
    p(x \mid \boldsymbol{\theta})
    \propto
    \exp\left(
        A(x)^\top \boldsymbol{\theta}
        -
        \frac12
        \boldsymbol{\theta}^\top B(x) \boldsymbol{\theta}
    \right),
    \label{eqn:likelihood-multivariate}
\end{equation}
where $A(x) \in \mathbb{R}^d$ and $B(x) \in \mathbb{S}_{++}^d$ for every observation $x$. Using the matrix trace identity $\boldsymbol{\theta}^T B(x) \boldsymbol{\theta} = \text{vec}(B(x))^T \text{vec}(\boldsymbol{\theta}\boldsymbol{\theta}^T)$, this family maps exactly to our general framework by setting the data sufficient statistic to $S(x) = (A(x), - \frac{1}{2} \text{vec}(B(x)))^T$. 


\paragraph{Maximal analogy.}

\begin{proposition}
There exists a unique maximal Bayes-based proportional analogy relation induced by likelihoods of the form $\mathcal{L}^{\mathrm{quad}}_{A,B}$, obtained when $\ImS = \mathbb{R}^d \times \text{vec}(\mathbb{S}_{++}^d)$.
\end{proposition}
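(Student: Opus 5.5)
The plan has two parts: show that every admissible image sits inside one fixed set, and exhibit a likelihood in the class $\mathcal{L}^{\mathrm{quad}}_{A,B}$ whose image is exactly that set.

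\emph{Upper bound.} By the definition of $\mathcal{L}^{\mathrm{quad}}_{A,B}$, every observation $x\in\mathcal{X}$ satisfies $A(x)\in\mathbb{R}^d$ and $B(x)\in\mathbb{S}_{++}^d$. Hence $S(x)=(A(x),-\tfrac12\text{vec}(B(x)))$ lies in $U:=\mathbb{R}^d\times\text{vec}(\mathbb{S}_{++}^d)$, with the factor $-\tfrac12$ absorbed into the parametrization of the second block. The null observation $\varnothing$ contributes $S(\varnothing)=0$, i.e.\ the identity update. Adding it gives $\ImS\subseteq U\cup\{0\}$ for every admissible $S$. So no likelihood in the class can have an image larger than this common set, which I will keep calling $\mathbb{R}^d\times\text{vec}(\mathbb{S}_{++}^d)$ with $\varnothing$ understood as in the statement.

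\emph{Attainment.} I take $\mathcal{X}=\mathbb{R}^d\times\mathbb{S}_{++}^d$ and, for $x=(a,B)$, set $A(x)=a$ and $B(x)=B$. The density in \eqref{eqn:likelihood-multivariate} must then be normalizable in $x$ for each $\boldsymbol\theta$. The main obstacle is choosing a base measure $h(x)$ on this observation space that makes $\int h(x)\exp(a^T\boldsymbol\theta-\tfrac12\boldsymbol\theta^TB\boldsymbol\theta)\,dx$ finite. I would try a product $h(a,B)=g(a)k(B)$. Take $g$ Gaussian in $a$, so the $a$-integral is finite for every $\boldsymbol\theta$. Take $k$ any probability density on $\mathbb{S}_{++}^d$ (e.g.\ Wishart); the factor $\exp(-\tfrac12\boldsymbol\theta^TB\boldsymbol\theta)\le 1$ then keeps the $B$-integral finite. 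Alternatively, a counting measure on a countable set is not enough, since the image must equal the full set. With this choice $S$ is surjective onto $U$, so $\text{Im}(S_{\max})=U\cup\{0\}$, and this image contains every other admissible image.

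\emph{Maximality and uniqueness.} The preceding proposition on inclusion of images implies that the analogy induced by $S_{\max}$ contains the analogy of every other $S$, so it is maximal. For uniqueness, Theorem~\ref{thm:valid-analogies-expfamily} shows that the analogy relation depends on $S$ only through $\ImS$. Suppose $S'$ also induces a maximal relation. The relation for $S'$ must then contain that of $S_{\max}$. Test this on quadruples with $\eta_B-\eta_A=v$ for arbitrary $v\in U$. The characterization in Theorem~\ref{thm:valid-analogies-expfamily} then forces $v$ or $-v$ to lie in $\text{Im}(S')$. Since $-v\notin U$ whenever the matrix block is nonzero, we get $v\in\text{Im}(S')$. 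Hence $\text{Im}(S')=U\cup\{0\}$, and the two relations coincide. The relation itself is therefore unique, even though the likelihood realizing it is not.
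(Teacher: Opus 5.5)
Your upper bound (including the remark that $\varnothing$ adds $0$ to the image) and your uniqueness argument via Theorem~\ref{thm:valid-analogies-expfamily} and $-v\notin U$ are fine. They are more explicit than the paper, which only proves attainment.

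The gap is in the attainment step. What you need is not that $Z(\boldsymbol{\theta})=\int h(x)\exp(a^T\boldsymbol{\theta}-\tfrac12\boldsymbol{\theta}^TB\boldsymbol{\theta})\,dx$ be finite, but that it be \emph{independent of} $\boldsymbol{\theta}$. Proposition~\ref{prop:conjugate-exponential}, and therefore Theorem~\ref{thm:valid-analogies-expfamily}, which you rely on, requires $p(x\mid\boldsymbol{\theta})=h(x)\exp(S(x)^TT(\boldsymbol{\theta}))$ with a constant depending on $x$ only. The paper uses exactly this in Proposition~\ref{prop:positive-as-dirichlet-likelihood}, where the integral must equal $1$ for every $\theta$. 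If you normalize by a $\boldsymbol{\theta}$-dependent $Z$, every Bayesian update is multiplied by $1/Z(\boldsymbol{\theta})$. Take your choice $g=\mathcal{N}(\mu_g,\Sigma_g)$ and $k$ Wishart with scale $V$ and $n$ degrees of freedom. Then $Z(\boldsymbol{\theta})=\exp(\mu_g^T\boldsymbol{\theta}+\tfrac12\boldsymbol{\theta}^T\Sigma_g\boldsymbol{\theta})\,(1+\boldsymbol{\theta}^TV\boldsymbol{\theta})^{-n/2}$. The posterior of a Gaussian prior therefore carries the factor $(1+\boldsymbol{\theta}^TV\boldsymbol{\theta})^{n/2}$ and is not Gaussian. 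So this likelihood is not in $\mathcal{L}^S$ and induces no arithmetic relation on natural parameters. Even the Gaussian factor alone would change the effective statistic to $(a-\mu_g,-\tfrac12\text{vec}(B+\Sigma_g))$, whose precision block only covers $\Sigma_g+\mathbb{S}_{++}^d$. For Gaussian $g$, exact cancellation would in fact force $k$ to be a point mass (uniqueness of Laplace transforms), so tuning parameters cannot rescue the construction.

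The missing idea is to couple $a$ with $B$ so that the linear term compensates the quadratic one. Take $h(a,B)=k(B)\,\mathcal{N}(a;0,B)$, i.e.\ $p(a,B\mid\boldsymbol{\theta})=k(B)\,\mathcal{N}(a;B\boldsymbol{\theta},B)$, for any everywhere-positive density $k$ on $\mathbb{S}_{++}^d$ (your Wishart is fine here). Since $\int\mathcal{N}(a;0,B)\,e^{a^T\boldsymbol{\theta}}\,da=e^{\frac12\boldsymbol{\theta}^TB\boldsymbol{\theta}}$, the normalizer is $\int k=1$ for every $\boldsymbol{\theta}$, and $S$ is onto $U$.

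This is the paper's construction written in sufficient-statistic coordinates. The paper uses the Gaussian regression model $y=X\boldsymbol{\theta}+\boldsymbol{\varepsilon}$, which is a genuine density in $y$ for every $\boldsymbol{\theta}$. For it, $B=X^TX/\sigma^2$ and, given $X$, $A=X^Ty/\sigma^2\sim\mathcal{N}(B\boldsymbol{\theta},B)$. Any $(v,M)$ is then realized by $X=\sigma M^{1/2}$ and $y=\sigma^2(X^T)^{-1}v$.
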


\begin{proof}
We consider a standard Gaussian linear regression observation model $y = X\boldsymbol{\theta} + \boldsymbol{\varepsilon}$, where $X \in \mathbb{R}^{d \times d}$ is an invertible design matrix and $\boldsymbol{\varepsilon} \sim \mathcal{N}(0, \sigma^2 I_d)$. The complete-data likelihood belongs to $\mathcal{L}^{\mathrm{quad}}_{A,B}$ with:
\[
A(X,y) = \frac{1}{\sigma^2}X^\top y, \qquad B(X) = \frac{1}{\sigma^2}X^\top X.
\]
Let $M \in \mathbb{S}_{++}^d$ and $v \in \mathbb{R}^d$ be an arbitrary destination coordinate pair in the parameter space. We choose the matrix $X = \sigma M^{1/2}$, where $M^{1/2}$ denotes the unique symmetric positive definite square root of $M$. Substituting this yields $B(X) = \frac{1}{\sigma^2}X^\top X = M$. Since $M \in \mathbb{S}_{++}^d$, its square root $M^{1/2}$ is strictly invertible, making $X$ invertible. Choosing the data vector $y = \sigma^2 (X^\top)^{-1}v$ yields $A(X,y) = \frac{1}{\sigma^2}X^\top y = v$. Therefore, every arbitrary parameter pair $(v, -\frac{1}{2}\text{vec}(M))$ is fully realizable by a valid observation tuple $(X, y)$, proving that $\ImS = \mathbb{R}^d \times \text{vec}(\mathbb{S}_{++}^d)$. 
\end{proof}

\subsection{Bayes-Based Analogies on Dirichlet Distributions}

The Dirichlet distribution is a probability distribution of the simplex $\Delta$, characterized by a vector $\alpha = (\alpha_1, \dotsc, \alpha_K)$. Its density with respect to $\boldsymbol{\alpha}$ is:
\[
p(\theta_1, \dotsc, \theta_K \mid \boldsymbol{\alpha}) = \frac{\Gamma\left(\sum_{k=1}^K \alpha_k \right)}{ \prod_{k=1}^K \Gamma(\alpha_k)} \prod_{k=1}^K \theta_k^{\alpha_k - 1}
\]

\paragraph{Natural parameters.} A standard choice of natural parameters is $\eta(\boldsymbol{\alpha}) = (\alpha_1 - 1, \dotsc, \alpha_K - 1) \in (1, +\infty)^K$. The corresponding sufficient statistics are $T(\boldsymbol{\theta}) = (\log \theta_1, \dotsc, \log \theta_K)$.

\paragraph{Compatible likelihood.} With this choice of sufficient statistics, the likelihood family $\mathcal{L}^S$ is the family of distributions of the form:
\begin{equation}
    p(x\mid \theta) = h(x) \prod_{k=1}^K \theta_k^{S_k(x)}
    \label{eqn:compatible-likelihood-dirichlet}
\end{equation}

\begin{proposition}
Let $\mu$ be the reference measure on the observation space, and define the measure $\nu$ by $\nu(dx) = h(x) \mu(dx)$. Every coordinate of $S$ is non-negative almost everywhere with respect to $\nu$.
\label{prop:positive-as-dirichlet-likelihood}
\end{proposition}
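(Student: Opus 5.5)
The plan is to use the fact that $p(x\mid\theta)$ is a probability density in $x$ for \emph{every} $\theta$ in the simplex. In particular, it must be normalizable:
\[
\int_{\mathcal X} \prod_{k=1}^K \theta_k^{S_k(x)}\,\nu(dx) = 1 \quad \text{for all } \theta \in \Delta .
\]
We then argue by contradiction. Suppose that for some coordinate $k$ the set $E=\{x : S_k(x)<0\}$ has positive $\nu$-measure, and derive a violation of this normalization identity by sending $\theta_k\to 0$.

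First I refine $E$ to a set of uniformly negative exponent. Since $E=\bigcup_n \{x : S_k(x)\le -1/n\}$, continuity of measure gives some $\varepsilon>0$ with $E_\varepsilon=\{S_k\le-\varepsilon\}$ of positive $\nu$-measure. Next I need control over the other coordinates. The sets $\{x : |S_j(x)|\le M \text{ for all } j\}$ increase to $\mathcal X$ as $M\to\infty$, so intersecting gives a set $F\subseteq E_\varepsilon$ with $\nu(F)>0$ on which all $|S_j|\le M$.

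Now I choose a path in the interior of the simplex: $\theta_k=t$ and $\theta_j=(1-t)/(K-1)$ for $j\neq k$, with $t\to 0^+$. On $F$, each factor $\theta_j^{S_j(x)}$ for $j\ne k$ is bounded below by a positive constant $c>0$ uniformly in $t\le 1/2$, because the bases lie in $[1/(2(K-1)),1]$ and the exponents lie in $[-M,M]$. The factor $t^{S_k(x)}$ is at least $t^{-\varepsilon}$ for $t<1$. Hence
\[
1 \ge \int_F \prod_j \theta_j^{S_j(x)}\,\nu(dx) \ge c^{K-1}\, t^{-\varepsilon}\,\nu(F) \longrightarrow \infty ,
\]
which is a contradiction. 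So $S_k\ge 0$ holds $\nu$-a.e. for every $k$.

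The main obstacle is measure-theoretic bookkeeping rather than any conceptual difficulty. It amounts to keeping the other factors uniformly bounded below, which the truncation set $F$ handles. One must also make sure $\nu(F)$ can be made finite and positive: if $\nu$ is not $\sigma$-finite we only need $\nu(F)>0$, since the lower bound blows up anyway, possibly to $+\infty\cdot$positive. A further subtlety is the case $\nu(F)=\infty$ for all such $F$, but then the inequality already contradicts the integral equalling $1$ at any fixed $t$, so the argument covers it.
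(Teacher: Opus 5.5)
Your proof is correct and follows the same route as the paper. Both argue by contradiction, assuming $\{S_k<0\}$ has positive $\nu$-measure, and send $\theta_k\to 0$ along the same interior path (the paper parametrizes it as $\theta_k=e^{-t}$, others $(1-e^{-t})/(K-1)$) until the normalization integral diverges. The only difference is the last step: the paper gets divergence from Fatou's lemma applied to the pointwise blow-up on $\{S_k<0\}$, while your truncation to $F=\{S_k\le-\varepsilon,\ |S_j|\le M\}$ gives the explicit bound $c^{K-1}t^{-\varepsilon}\nu(F)$, which avoids Fatou at the cost of some extra bookkeeping.
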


\begin{proof}

Consider a compatible likelihood of the form of Equation~\ref{eqn:compatible-likelihood-dirichlet}, where $\theta$ belongs to the interior of the probability simplex. 

Fix a coordinate $j$. We assume, for contradiction, that the set of observations $x$ for which $S_j(x)$ is negative has strictly positive $\nu$-measure.

We define a sequence $\theta(t)$ such that $\theta_j(t) = \exp(-t)$ and $\theta_k(t) = \frac{1- \exp(-t)}{K - 1}$ for $k \neq j$. For any $x$ such that $S_j(x)$, we then have $\theta_j(t)^{S_j(x)} \rightarrow + \infty$ and $\theta_k(t)^{S_h(x)}$ tends to a positive constant for $k \neq j$. Consequently, $\lim_t \prod_{k=1}^K \theta_k(t)^{S_k(x)} = + \infty$. By Fatou's lemma, we then have $\lim_t \int \prod_{k=1}^K \theta_k(t)^{S_k(x)} \nu(dx) = +\infty$, which is impossible for a probability density. This shows that $S_k(x) \geq 0$ $\nu$-almost everywhere for all $k$.
\end{proof}

\paragraph{Maximal analogy.} Based on the following proposition, we can build a likelihood that makes the analogy maximal in the sense that $\ImS = \mathbb{R}^K$. 

\begin{proposition}
There exists a maximal Bayes-based proportional analogy between Dirichlet distributions induced by likelihood $\mathcal{L}^{S+}$ that satisfy Proposition~\ref{prop:positive-as-dirichlet-likelihood}. 
\end{proposition}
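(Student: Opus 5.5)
The plan is to show two things. First, every admissible likelihood has its image inside a fixed cone. Second, some admissible likelihood attains that whole cone. Maximality in the sense defined after Theorem~\ref{thm:valid-analogies-expfamily} then follows from the inclusion proposition, since a larger $\ImS$ yields a larger analogy relation.

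\textbf{Step 1 (upper bound on images).} By Proposition~\ref{prop:positive-as-dirichlet-likelihood}, for any compatible likelihood of the form (\ref{eqn:compatible-likelihood-dirichlet}), every coordinate $S_k$ is non-negative $\nu$-almost everywhere. Observations in a $\nu$-null set never occur and play no role in the Bayesian updates. We may therefore redefine $S$ on that set (or discard it) without changing the likelihood, so that $S(x) \in [0,+\infty)^K$ for all $x$. The null observation gives $S(\varnothing) = 0$, which also lies in this cone. Hence $\ImS \subseteq [0,+\infty)^K$ for every admissible $S$. The ``$\mathbb{R}^K$'' announced before the statement should be read as this non-negative orthant, because negative coordinates are excluded by the previous proposition.

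\textbf{Step 2 (a likelihood attaining the cone).} I will build $\mathcal{L}^{S+}$ so that $\text{Im}(S^+) = [0,+\infty)^K$. For integer vectors $s \in \mathbb{N}^K$ this is immediate. Take as observation an i.i.d.\ sequence of categorical draws with $s_k$ occurrences of category $k$; its likelihood is exactly $\prod_k \theta_k^{s_k}$ with $h \equiv 1$. To reach arbitrary real $s \in [0,+\infty)^K$, I would use a covariate-indexed Bernoulli trial. The observation is a pair $x = (s, 1)$, where $s$ is a known design vector and the binary outcome has success probability $\prod_k \theta_k^{s_k}$. This is a valid probability for every $\theta$ in the simplex, because $\theta_k \le 1$ and $s_k \ge 0$. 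A successful outcome contributes the factor $\prod_k \theta_k^{s_k}$, so $S^+(s,1) = s$, and together with $\varnothing$ we obtain $\text{Im}(S^+) = [0,+\infty)^K$. By Step~1, $\text{Im}(S) \subseteq \text{Im}(S^+)$ for every admissible $S$. The inclusion proposition then shows that every analogy for $\mathcal{L}^S$ is also an analogy for $\mathcal{L}^{S^+}$, which is exactly maximality.

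\textbf{Main obstacle.} The delicate point is Step~2. The failure outcome has likelihood $1-\prod_k\theta_k^{s_k}$, which is not of the product form (\ref{eqn:compatible-likelihood-dirichlet}). So the observation model must be restricted to successful outcomes, e.g.\ as a selected or conditioned sample, while keeping the form required by $\mathcal{L}^S$. I would first try to argue that only the observations actually used in the updates need the product form, since Theorem~\ref{thm:valid-analogies-expfamily} only queries $\ImS$. If that is judged insufficient, the fallback is to settle for the multinomial construction. It realizes $\mathbb{N}^K$; rescaling the observation model (e.g.\ tempered likelihoods $\theta^{s/m}$ arising from $m$-fold pooled data) then reaches a dense subset of $[0,+\infty)^K$, and the construction can be refined until the full orthant is attained.
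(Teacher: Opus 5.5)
\textbf{Step 1 fails, and with it the maximality claim.} Proposition~\ref{prop:positive-as-dirichlet-likelihood} forces $S_k\ge 0$ only $\nu$-almost everywhere. It places no constraint on $S$ on a $\nu$-null set. The paper deliberately keeps a pointwise, existential notion of update. The relation at a given $x$ is defined by Bayes' rule at that $x$, and an analogy only needs \emph{some} $x$ realizing the required displacement, whatever the measure of the set containing it. The paper says so explicitly in the remark right after its proof. So null-set observations cannot be discarded, and redefining $S$ on them changes $\ImS$ and hence the relation.

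The idea you are missing is to exploit exactly this. The paper takes the Cantor set $\mathcal{C}$ and a measurable surjection $T:\mathcal{C}\to\mathbb{R}^K$ (a Borel isomorphism of uncountable standard Borel spaces). It sets $S=T$ on $\mathcal{C}$ and $S=0$ elsewhere. Because $\mathcal{C}$ is Lebesgue-null, the density still integrates to $1$ for every $\theta$, and $S\ge 0$ holds $\nu$-a.e., so the likelihood is admissible. Yet $\ImS=\mathbb{R}^K$, negative coordinates included. This refutes your claim that every admissible $S$ has image in $[0,+\infty)^K$. It also shows your $S^+$ is not maximal in the paper's sense, since $\mathbb{R}^K\not\subseteq\mathrm{Im}(S^+)$. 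Reading the announced $\mathbb{R}^K$ as the orthant is therefore a change of statement, to an ``essential image'' version for which $\ImS$ and maximality would first have to be redefined.

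\textbf{Step 2 does not establish even your weaker target.} The covariate-indexed Bernoulli model is not of the form (\ref{eqn:compatible-likelihood-dirichlet}) because of its failure outcomes. It therefore lies outside the class $\mathcal{L}^{S+}$ over which maximality is defined. Arguing that only the observations used in updates need product form amounts to leaving that class. Conditioning on success does not help either. With $g$ the density of the design vector, it introduces the normalizer $Z(\theta)=\int g(s)\prod_k\theta_k^{s_k}\,ds$, which depends on $\theta$. The posterior is then no longer the Dirichlet with natural parameter $\eta+s$, and Corollary~\ref{cor:relation-condition-exp-fam} no longer applies. Tempered likelihoods $\theta^{s/m}$ are not observation models, and a dense subset of the orthant is not the orthant.

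If you want the essential-image version, you can instead split the failure event into countably many product-form outcomes:
\begin{itemize}
\item Telescope $1-\prod_k\theta_k^{s_k}=\sum_k\bigl(\prod_{i<k}\theta_i^{s_i}\bigr)(1-\theta_k^{s_k})$.
\item For $0<a\le 1$, use $1-\theta^{a}=\sum_{n\ge1}(-1)^{n+1}\binom{a}{n}(1-\theta)^n$, whose coefficients are nonnegative.
\item For larger exponents, use $1-x^m=(1-x)\sum_{j<m}x^j$.
\item Expand $(1-\theta_k)^n=\bigl(\sum_{i\ne k}\theta_i\bigr)^n$.
\end{itemize}
That, however, proves a different proposition from the paper's.
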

\begin{proof}
Let $\mathcal{C} \subset [0, 1]$ be the Cantor set. We remind that $\mathcal{C}$ is an uncountable measurable set of Lebesgue measure zero. $\mathcal{C}$ and $\mathbb{R}^K$ are isomorphic as uncountable standard Borel spaces, so there exists a measurable surjective map $T: \mathcal{C} \rightarrow \mathbb{R}^K$. We then define $S(x) = T(x)$ if $x \in \mathcal{C}$ and $S(x) = 0$ otherwise. By construction, we have $\ImS = \mathbb{R}^K$. Since $\mathcal{C}$ is a Lebesgue measure zero, we have $\int p(x \mid \theta) dx = 1$, so the defined likelihood is valid. Since $S$ is surjective into $\mathbb{R}^K$, it defines a maximal analogy. 
\end{proof}

Although this construction satisfies the present definition of Bayes-based analogies, it reveals a subtle measure-theoretic aspect of the framework. The existence of an analogy depends only on the existence of observations producing the corresponding Bayesian updates, irrespective of whether these observations belong to sets of positive measure. Consequently, maximality may be achieved through observations that are statistically negligible. Whether such exceptional observations should be regarded as genuine generators of analogies is ultimately a design choice. In the present work, we retain the existential definition, leaving the investigation of stronger notions of reachability for future work.

%% file: sections/approximate-solver.tex
The cases explored so far assume that the distributions of interest are parametric and belong to the exponential family. The question of generalizing to arbitrary distributions is important for potential applications in machine learning. In this section, we discuss a sampling-based approach to this problem inspired by Approximate Bayesian Computation~\cite{sunnaaker2013approximate}.

\subsection{Problem Specification}

We consider the setting where the distributions $p_A, p_B$ and $p_C$ are not known in closed form, but only through finite samples $\lbrace \theta_A^1, \dotsc, \theta_A^{n_A}\rbrace, \lbrace \theta_B^1, \dotsc, \theta_B^{n_B} \rbrace$ and $\lbrace \theta_C^1, \dotsc, \theta_C^{n_C} \rbrace$ where $\theta_A^i, \theta_B^i, \theta_C^i \in \Theta$. 
Given a likelihood $\mathcal{L}$, we aim to generate a sample from an unknown distribution $p_D$ such that $p_A : p_B ::^\mathcal{L} p_C : p_D$. 

Since the latent datasets responsible for the Bayesian updates are unknown, we formulate their estimation as the search for two datasets $x, x^\prime \in \bar{\mathcal{X}}$ satisfying the following consistency conditions. Let $\lbrace \hat{\theta}_B^1, \dotsc, \hat{\theta}_B^{m_B} \rbrace$ and $\lbrace \hat{\theta}_C^1, \dotsc, \hat{\theta}_C^{m_C} \rbrace$ denote samples drawn from the posteriors obtained by updating $p_A$ with $x$ and $x^\prime$ respectively. Likewise, let $\lbrace \hat{\theta}_D^{(B), 1}, \dotsc, \hat{\theta}_D^{(B), m_D} \rbrace$ and $\lbrace \hat{\theta}_D^{(C), 1}, \dotsc, \hat{\theta}_D^{(C), m_D} \rbrace$ denote samples drawn respectively from the update of $p_B$ with $x^\prime$ and $p_C$ with $x$. The datasets $x$ and $x^\prime$ are then sought so to satisfy $\lbrace \theta_B^i \rbrace \approx \lbrace \hat{\theta}_B^i \rbrace$, $\lbrace \theta_C^i \rbrace \approx \lbrace \hat{\theta}_C^i \rbrace$ and $\lbrace \hat{\theta}_D^{(B), i} \rbrace \approx \lbrace \hat{\theta}_D^{(C), i} \rbrace$, where $\approx$ denotes similarity between empirical distributions.

Since the analogy is based on the symmetric relation $\leftrightsquigarrow$, the latent datasets $x$ and $x^\prime$ are not restricted to updating $p_A$ into $p_B$ and $p_C$. They may equally correspond to updates in the reverse direction, i.e., from $p_B$ to $p_A$ and/or from $p_C$ to $p_A$. Consequently, the algorithm evaluates all combinations of forward and reverse updates when searching for a consistent analogical completion.

\subsection{Algorithm}

\paragraph{Posterior sampling.}


Following the standard importance sampling framework~\cite{robert2004monte}, posterior distributions are approximated through weighted sampling. Given a prior distribution $p(\theta)$ and a candidate dataset $x$, one repeatedly samples $\theta^* \sim p(\theta)$ and assigns to each sample an importance weight $w(\theta^*)$ proportional to the likelihood $\mathcal{L}(x \mid \theta^*)$. After normalization of the weights, the resulting weighted sample provides an empirical approximation of the posterior distribution $p(\theta \mid x)$, from which posterior samples can be obtained through resampling. This procedure is used to approximate each of the posterior distributions involved in the consistency conditions of the previous section.

\paragraph{Distribution comparison.}

Since the posterior distributions are represented only through finite samples, they cannot be compared analytically. Instead, similarity between two distributions is estimated through a discrepancy measure defined on empirical samples. Several choices are possible, including the Maximum Mean Discrepancy (MMD), the Wasserstein distance, the energy distance, or other metrics between empirical measures. The choice of discrepancy determines the notion of closeness used throughout the algorithm and replaces the exact equality required in the theoretical definition of analogy.

\paragraph{Joint optimization.}

The latent datasets $x$ and $x'$ are searched jointly so as to simultaneously satisfy the three consistency conditions introduced above. Let $d(\cdot,\cdot)$ denote the chosen discrepancy between empirical distributions. The objective is to find $(x,x')$ minimizing
\[
d\!\left(\{\theta_B^i\},\{\hat{\theta}_B^i\}\right)
+
d\!\left(\{\theta_C^i\},\{\hat{\theta}_C^i\}\right)
+
d\!\left(\{\hat{\theta}_D^{(B),i}\},
         \{\hat{\theta}_D^{(C),i}\}\right).
\]
A candidate pair $(x,x')$ is accepted only if each of the three discrepancies is smaller than a prescribed threshold $\varepsilon$. Otherwise, the analogical equation is considered to have no solution for the chosen likelihood model and tolerance. Whenever several candidate pairs satisfy these conditions, they provide alternative approximations of the unknown distribution $p_D$.

\paragraph{Reverse updates.}

The above procedure assumes that the unknown distribution $p_D$ is obtained by updating either $p_B$ or $p_C$. However, because the analogical relation is defined through the symmetric reachability relation $\leftrightsquigarrow$, it is also possible that $p_D$ is instead the \emph{prior} from which both $p_B$ and $p_C$ are obtained after a Bayesian update. In this case, posterior sampling cannot be performed directly. Let $p(\theta)$ denote the known posterior obtained after updating an unknown prior $p_D(\theta)$ with a dataset $x$. By Bayes' rule, $p_D(\theta)\propto\frac{p(\theta)}{\mathcal{L}(x\mid\theta)}.$- Consequently, given samples $\{\theta^i\}_{i=1}^N$ from $p$, an empirical approximation of $p_D$ can be obtained by importance sampling, assigning weights $w_i \propto 1 / \mathcal{L}(x\mid\theta^i)$ followed by a resampling step. This inverse update allows the proposed algorithm to handle all possible orientations of the analogy relation while remaining entirely sampling-based.

\subsection{Empirical Validation}

We evaluate the proposed sampling-based solver on a synthetic benchmark for which the analytical solution of the analogical equation is known. This setting allows us to assess the extent to which the proposed algorithm recovers the exact solution from sampled distributions.

\paragraph{Protocol.}

A total of $N=3121$ independent analogical equations are generated. Each analogy is constructed by first randomly sampling the parameters of a distribution $p_A$ together with two regression observations under the linear regression likelihood. The corresponding posterior distributions $p_B$ and $p_C$ are then obtained analytically through Bayesian updating, while the target distribution $p_D$ is computed exactly using Theorem~\ref{thm:valid-analogies-expfamily}. Consequently, the analytical solution of every analogical equation is known.
For each analogy, only samples drawn from the distributions $p_A$, $p_B$, and $p_C$ are provided to the sampling algorithm. Each distribution is represented by $100$ particles. 

We evaluate two likelihood models: the normal likelihood and the linear regression likelihood. Although both belong to the class $\mathcal{L}_S$, only the latter induces maximal analogies. As discussed above, the normal likelihood generates only a limited support of analogical transformations.

To recover the latent observations, we compare two optimization strategies: an exhaustive grid search and Bayesian optimization~\cite{frazier2018bayesian}. For the normal likelihood, the search space is discretized into a $100\times100$ grid over $\mathcal{X}^2$. For the regression likelihood, each observation is parameterized by a predictor-response pair $(z,y)$. We discretize the predictor into $15$ values and the response into $31$ values, yielding $15\times31=465$ candidate observations and therefore $465^2=216,225$ candidate observation pairs for a single analogical equation.

A candidate solution is retained only after two successive filtering steps. First, every importance-sampling update must satisfy an effective sample size $\mathrm{ESS}= 1 / \sum_i w_i^2$ of at least $15$, ensuring that the posterior approximation is not dominated by only a few particles. Second, the candidate must satisfy the reconstruction constraints by requiring the sum of the three Wasserstein distances to remain below $0.5$. This ensures both that the reconstructed distributions accurately approximate $p_B$ and $p_C$, and that the two independent reconstructions of $p_D$ are mutually consistent.

\paragraph{Metrics.}

The quality of the importance-sampling approximation is assessed through the effective sample size (ESS). The accuracy of the recovered solution is primarily evaluated using the $2$-Wasserstein distance between the analytical Gaussian solution and the Gaussian fitted to the recovered particles. We additionally report the absolute errors on the recovered mean and variance, $(\hat{\mu}_D,\hat{\sigma}_D^2)$, with respect to the analytical solution. Since $p_D$ is reconstructed through two independent analogy paths, we conservatively report the minimum ESS of the two reconstructions.

\paragraph{Implementation and execution.}
The source code is available on the Github repository \url{https://github.com/ppaamm/Analogies-on-Probability-Distributions-by-Bayes-Updating}. 
All experiments were performed on a laptop equipped with a 13th Gen Intel(R) Core(TM) i7-13700H CPU and 32 GB of RAM. No GPU acceleration was used.

\paragraph{Empirical results.}

For the normal loss, the algorithm does not recover any solution. This result was expected, since the support of the analogy is of measure zero in this case. This observation indicates that our algorithm does not retrieve incorrect solutions when the analogy is not maximal. 

The grid search with the regression loss was performed only on a few analogies, because of very high computation times. Whereas a typical grid search lasted between $5$ and $20$ seconds for the search with the normal likelihood (grid of size $10,000$), it lasted up to $10$ minutes with the regression likelihood (grid of size $216,225$). This encourages the use of Bayesian optimization for a faster search. The results presented below were all obtained used Bayesian Optimization.

The proposed solver successfully recovers solutions for $1918$ out of the $3121$ equations, i.e. $61.5\%$ of the generated analogies. A qualitative analysis indicates that this percentage depends mostly on the number of iterations in the BO, limited here to $40$. 

\begin{figure*}[t]
    \centering

    \begin{subfigure}[t]{0.32\textwidth}
        \centering
        \includegraphics[width=\linewidth]{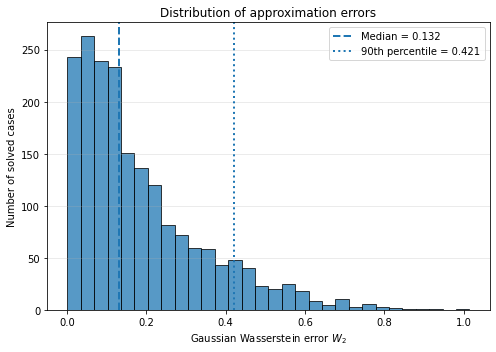}
        \caption{Distribution of the Wasserstein distance.}
        \label{fig:wasserstein_hist}
    \end{subfigure}
    \hfill
    \begin{subfigure}[t]{0.32\textwidth}
        \centering
        \includegraphics[width=.75\linewidth]{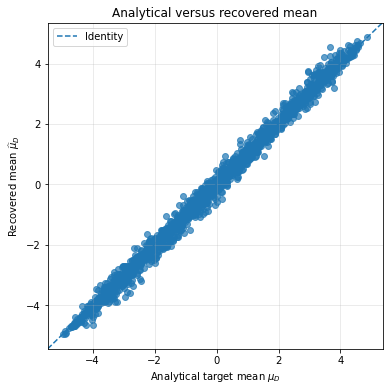}
        \caption{Correlation between $\mu_D$ and estimated $\hat{\mu}_D$.}
        \label{fig:correlation-mean}
    \end{subfigure}
    \hfill
    \begin{subfigure}[t]{0.32\textwidth}
        \centering
        \includegraphics[width=.75\linewidth]{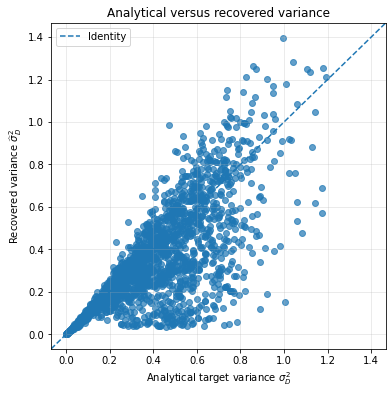}
        \caption{Correlation between $\sigma_D^2$ and estimated $\hat{\sigma}_D^2$.}
        \label{fig:correlation-variance}
    \end{subfigure}

    \caption{Empirical evaluation of the sampled solutions. (a) Distribution of the Wasserstein distance between the analytical and recovered solutions. (b) Distribution of the effective sample size. (c) Relationship between the minimum effective sample size and the reconstruction error.}
    \label{fig:empirical_results}
\end{figure*}

Among successful runs, the approximation error remains generally small, with a median Gaussian Wasserstein distance of $0.132$ and a $90$th percentile of $0.421$ (Figure~\ref{fig:wasserstein_hist}). The error distribution is strongly right-skewed, indicating that most analogies are solved accurately while a relatively small number of difficult instances produce substantially larger errors. An analysis of the correlation between the predicted parameters and the parameters of the analytical solutions (Figures~\ref{fig:correlation-mean} and \ref{fig:correlation-variance}) shows that the error is mostly due to the variance that tends to be under-estimated. We observe on contrary a strong correlation in the means, with a Pearson correlation of $0.997$. 

\begin{figure}
    \centering
    \includegraphics[width=0.9\linewidth]{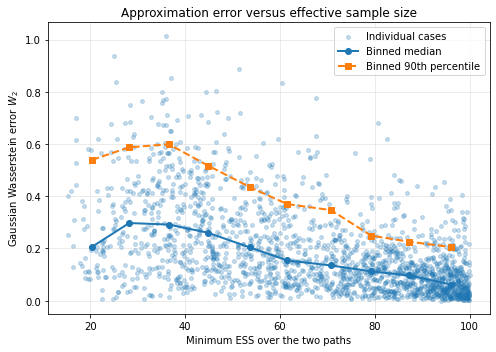}
    \caption{Comparison of the approximation error and the effective sample size. It can be observed that the approximation error tends to decrease when the ESS increases, which indicates that particle degeneracy is one of the main limitations of the current implementation.}
    \label{fig:ess_vs_error}
\end{figure}

Figure~\ref{fig:ess_vs_error} investigates the influence of the effective sample size on the approximation quality. A clear decreasing trend is observed: both the median approximation error and its upper quantiles decrease as ESS increases. In particular, large approximation errors occur almost exclusively for small ESS values, whereas high ESS values consistently lead to accurate reconstructions. Nevertheless, accurate solutions can occasionally be obtained even for relatively small ESS, suggesting that ESS should primarily be interpreted as a measure of numerical reliability rather than a deterministic predictor of approximation accuracy.

Overall, these experiments provide empirical evidence that the proposed sampling algorithm accurately approximates analogical solutions whenever the importance sampling step remains sufficiently effective. They further indicate that our current implementation has two main practical limitations: insufficient search misses solutions, and particle degeneracy globally produces higher approximation errors.

%% file: sections/conclusion.tex
In this paper, we introduced a characterization of analogies on probability distributions that satisfies the predicates of proportional analogy while relying on the intrinsic properties of probability distributions. Our definition yields a simple characterization of analogies between members of an exponential family, expressed as an arithmetic proportion between their natural parameters, up to an inversion property on a set of problems of measure zero. Unlike existing approaches, our framework also naturally extends to non-parametric families. To this end, we proposed a sampling-based algorithm that approximately solves analogical equations from samples of the first three distributions. A proof-of-concept experimental evaluation demonstrated the feasibility of this approach, although the current implementation remains limited by the approximation quality of importance sampling. Future work will investigate more advanced inference techniques, such as posterior sampling and Approximate Bayesian Computation~\cite{sunnaaker2013approximate}, to improve both the robustness and scalability of the solver.

More broadly, we believe that this work provides a principled foundation for connecting analogical reasoning with machine learning. Beyond its theoretical interest, the proposed framework opens new perspectives for applications such as transfer learning, data augmentation, distribution reconstruction, case-based reasoning, and meta-learning, where reasoning directly over probability distributions may provide a natural and expressive mechanism for knowledge transfer.